\documentclass{article}

\PassOptionsToPackage{numbers,compress}{natbib}
\usepackage[preprint]{neurips_2026}

\usepackage[utf8]{inputenc}
\usepackage[T1]{fontenc}
\usepackage{amsmath,amssymb,amsthm}
\newtheorem{proposition}{Proposition}
\usepackage{booktabs}
\usepackage{graphicx}
\usepackage{hyperref}
\hypersetup{hidelinks}
\usepackage{url}
\usepackage{multirow}
\usepackage{array}
\usepackage{caption}
\usepackage{subcaption}
\usepackage{xcolor}
\usepackage{enumitem}
\usepackage{colortbl}
\usepackage{microtype}

\title{Forget Attention: Importance-Aware Attention Is All You Need}

\author{%
  Suhyeong Shin \\
  Department of Computer Engineering \\
  Kangwon National University \\
  \texttt{shin0815@kangwon.ac.kr} \\
  \And
  Yeongwook Yang\thanks{Corresponding author.} \\
  Department of Computer Engineering \\
  Kangwon National University \\
  \texttt{yeongwook.yang@kangwon.ac.kr} \\
}

\begin{document}
\maketitle

\begin{abstract}
Combining attention's global retrieval with the sequential importance signal of state space models (SSMs) is the open challenge of hybrid language modeling. Transformers see everywhere but cannot prioritize; SSMs know what matters but cannot revisit. Existing hybrids---Jamba (block level) and Hymba (head level)---place the two in separate compartments, so neither informs the other during the attention computation itself. We propose \textbf{SISA} (SSM-Informed Softmax Attention), which adds an SSM-derived importance term \emph{directly inside the attention score} and realizes the full operation as a \textbf{single SDPA call on augmented query/key vectors}---no recurrent state, no custom kernel. At 152M / 5B tokens, SISA reaches LAMBADA-greedy 17.3\% (vs.\ Transformer 13.9 and Mamba-3 15.5) and attains NIAH 100\% from step 1K, \textbf{$7\times$ faster than Transformer's retrieval convergence}; at 369M, Mamba-3 leads LAMBADA while SISA preserves perfect NIAH and stock-SDPA execution. SISA thus defines a third design axis for SSM--attention hybrids---\textbf{score-level fusion}---beyond the block-level and head-level paradigms that have dominated the field.
\end{abstract}

\section{Introduction}

Foundation models for language are dominated by two architectural lineages with complementary strengths and complementary failures. Transformers~\citep{vaswani2017attention} let every position attend to every other, but the only criterion for relevance is query--key content similarity---the sequential dynamics in which a token appears are ignored. State space models (SSMs), notably the Mamba family~\citep{gu2023mamba, dao2024mamba2, mamba3}, process tokens in order and selectively update a recurrent state, but once information has decayed it cannot be recovered.

We diagnose this as a \emph{single} missing capability that neither family possesses on its own: an attention mechanism that knows \emph{which} tokens matter from sequential structure while remaining able to revisit any of them by content. The Transformer ``can look anywhere but does not know where to look''; the SSM ``knows what is important but cannot revisit what is past.''

Existing hybrid architectures attempt to combine the two---but the way they do so has remained essentially unchanged since the first generation. Jamba~\citep{jamba} interleaves SSM and Transformer layers (block level); Hymba~\citep{hymba} runs attention and SSM heads in parallel within a layer (head level). In both designs the two mechanisms remain compartmentalized: each produces its output independently and never references the other during the attention computation itself. A separate line---FoX~\citep{fox} and the position biases ALiBi/T5/DAPE---also adds to the score, but only as a scalar function of position or decay (Section~\ref{sec:score_biases}); we instead fuse a vector-valued SSM signal that encodes both decay and data-dependent rotation. \textbf{We argue that the missing principle is fusion at the level of the attention score}, where every weight is jointly determined by content similarity and SSM-derived importance.

We propose \textbf{SISA} (SSM-Informed Softmax Attention), in which the SSM signal enters the attention score directly:
\begin{equation}
\text{score}_{ij} = \underbrace{\frac{\mathbf{q}_i^\top \mathbf{k}_j}{\sqrt{d_h}}}_{\text{content match}} + \underbrace{\lambda \cdot \bar{\mathbf{C}}_i^\top \bar{\mathbf{B}}_j}_{\text{importance match}}
\end{equation}
The first term is standard content similarity; the second, derived from Mamba-3's mathematical framework, encodes cumulative decay and data-dependent rotation. A single algebraic substitution---concatenating the SSM channels onto $\mathbf{q}$ and $\mathbf{k}$---reduces the entire computation to one standard SDPA call on augmented Q/K vectors. \textbf{No recurrent state, no custom kernel, and full compatibility with FlashAttention.} We call this \textbf{score-level fusion}: a third integration point distinct from block- and head-level designs, where both mechanisms jointly determine every attention weight.

\begin{figure}[t]
\centering
\includegraphics[width=0.84\textwidth]{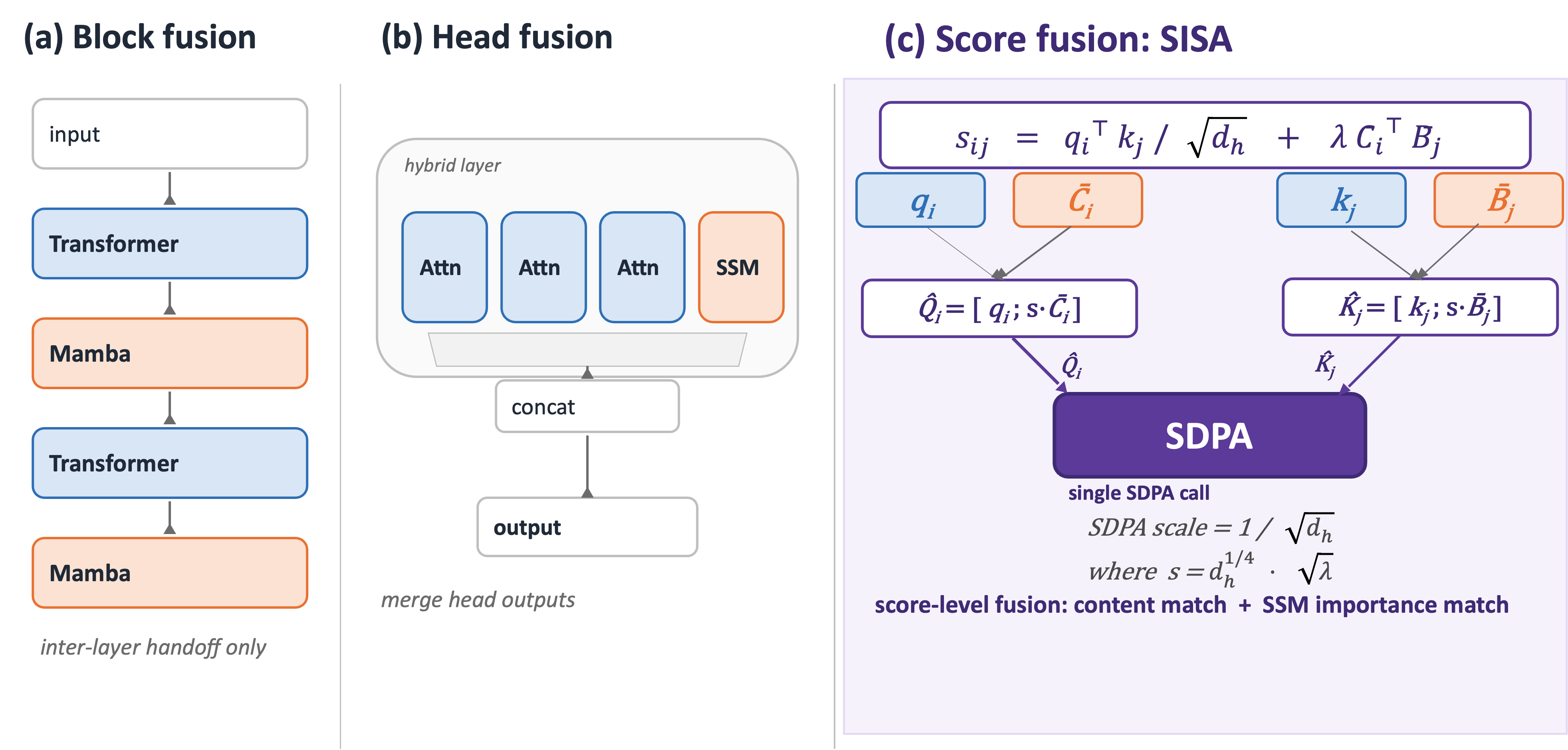}
\caption{Three levels of SSM-attention fusion. Block- and head-level hybrids combine outputs that were computed independently. SISA injects the SSM signal into the attention score itself, reducing the operation to augmented Q/K and one SDPA call.}
\label{fig:fusion_levels}
\end{figure}

\paragraph{Contributions.}
\begin{enumerate}[leftmargin=*]
\item We propose \textbf{score-level fusion}, integrating an SSM signal directly inside the attention score. A broad survey of SSM-Transformer hybrids (Appendix~\ref{sec:hybrid_survey}) finds no prior use of a \emph{vector-valued, data-dependent} score bias of this kind.
\item Through an augmented Q/K construction, the additive SSM bias is realized in a \textbf{single SDPA call} compatible with FlashAttention---eliminating the custom CUDA kernels required by Mamba variants.
\item \textbf{Improvements over Transformer:} +3.4\,pp on LAMBADA at 152M (relative +24\%) and +1.0\,pp at 50M. At 369M, the mb=2 setting of SISA $d_s{=}32$ matches Transformer (both 15.4); under mb=4 fair retrain it is slightly lower (SISA $d_s{=}128$: 14.8 vs.\ 15.4). NIAH reaches 100\% after only 10\% of training, preserving retrieval.
\item \textbf{Trade-off with Mamba-3:} SISA wins on (a) NIAH convergence---100\% throughout, vs.\ Mamba-3 reaching 99\% only at step 8K (152M) and 86.5\% at 369M (mb=4), (b) FlashAttention compatibility and +25\% throughput, and (c) LAMBADA at 152M (17.3 vs.\ 15.5). Mamba-3's full SSM recurrence wins on LAMBADA at 369M (17.4 vs.\ 14.8). The two architectures have clearly separable strengths.
\item \textbf{$d_s$ scaling study:} A systematic ablation over $d_s \in \{16, 32, 64, 128\}$ at three scales quantifies the trade-off between SSM channel size and FFN reduction, yielding a scale-adaptive $d_s$ guideline.
\end{enumerate}

\section{Related Work}

\subsection{SSM--Attention Hybrids}

Table~\ref{tab:related} summarizes the major hybrids. Block-level designs (Jamba, Samba, Zamba, Griffin, Nemotron-H) interleave entire layers; head-level designs (Hymba, Falcon-H1) split heads within a single layer. \textbf{In both granularities the two paths are computed by separate parameter sets and combined only after each has produced its output}, so the SSM signal never enters the attention score that determines which token attends to which---\textbf{a structural limitation our score-level fusion proposal directly addresses}.

\begin{table}[ht]
\centering\small
\caption{SSM--attention hybrids. Across the major architectures we surveyed (full list in Appendix~\ref{sec:hybrid_survey}), no prior work fuses SSM and attention at the score level via a \textit{vector-valued data-dependent score bias}.}
\label{tab:related}
\begin{tabular}{llll}
\toprule
\textbf{Model} & \textbf{Fusion level} & \textbf{Combination} & \textbf{Mid-op cross-ref} \\
\midrule
Jamba~\citep{jamba} & Block & 1{:}7 layer interleave & no \\
Samba~\citep{samba} & Block & Mamba+SWA+MLP & no \\
Griffin~\citep{griffin} & Block & RG-LRU + local attn. & no \\
Hymba~\citep{hymba} & Head & Attn $\parallel$ SSM heads & no \\
Falcon-H1~\citep{falconh1} & Head & Attn + Mamba-2 heads & no \\
\midrule
\textbf{SISA (ours)} & \textbf{Score} & \textbf{Augmented Q/K $\to$ SDPA} & \textbf{yes} \\
\bottomrule
\end{tabular}
\end{table}

\subsection{Attention Score Biases}
\label{sec:score_biases}

ALiBi~\citep{alibi} adds a fixed distance penalty to the logits, T5~\citep{raffel2020t5} adds a learned position bias, and DAPE~\citep{dape} makes the bias content-dependent. \textbf{All three are pure positional priors}: they encode ``how far apart are two tokens,'' not ``what happened in the sequence between them.'' \textbf{None can be informed by the input's sequential dynamics---precisely the signal that an SSM is built to produce.}

The closest prior work is FoX~\citep{fox} (Forgetting Transformer), which adds a cumulative forget-gate bias to the score: $A = \text{softmax}(QK^\top + D)$. Here $D$ is a per-pair scalar (decay only). SISA's bias is a $d_s$-dimensional inner product that encodes \emph{both} decay \emph{and} data-dependent rotation. FoX uses a scalar forget-gate bias and avoids materializing the full bias matrix via a modified FlashAttention-style implementation; SISA differs by realizing the bias purely through augmented Q/K, so any unmodified FlashAttention/SDPA kernel works without customization.

\begin{table}[ht]
\centering\small
\caption{Comparison of attention score biases. ``FlashAttn integration'' indicates whether the bias works with the stock kernel as-is or needs a kernel modification (some cases are now supported by recent FlashAttention or via FlexAttention).}
\begin{tabular}{llll}
\toprule
\textbf{Method} & \textbf{Bias form} & \textbf{Encodes} & \textbf{FlashAttn integration} \\
\midrule
ALiBi & fixed scalar & distance & native (FA $\geq$ 2.4) / FlexAttention \\
T5 relative & learned scalar & distance & kernel modification \\
DAPE & data-dep.\ scalar & distance + content & kernel modification \\
FoX & cumulative scalar & decay & modified FA implementation \\
\midrule
\textbf{SISA} & \textbf{$d_s$-dim inner prod.} & \textbf{decay + rotation} & \textbf{stock SDPA (augmented Q/K)} \\
\bottomrule
\end{tabular}
\end{table}

\section{Method}

\subsection{SISA Score}

An attention weight should reflect both content match and sequential importance; standard attention captures only the former. We add an SSM-derived importance term to the standard score $s_{ij} = \mathbf{q}_i^\top \mathbf{k}_j / \sqrt{d_h}$:
\begin{equation}
s_{ij}^{\text{SISA}} = \frac{\mathbf{q}_i^\top \mathbf{k}_j}{\sqrt{d_h}} + \lambda \cdot \bar{\mathbf{C}}_i^\top \bar{\mathbf{B}}_j, \quad i \geq j
\label{eq:sisa_score}
\end{equation}
Here $\lambda > 0$ is a per-head learnable positive scalar, and $\bar{\mathbf{C}}_i, \bar{\mathbf{B}}_j$ are projection vectors carrying SSM dynamics. We enforce causality ($i \geq j$) using SDPA's built-in causal flag.

\subsection{SSM Channels}

We exploit the equivalence between Mamba-3's complex-valued SSM and a data-dependent RoPE~\citep{mamba3}. From an input $\mathbf{x}_t$:

\paragraph{Projections.} $\mathbf{B}_t = \mathbf{W}_B \mathbf{x}_t$, $\mathbf{C}_t = \mathbf{W}_C \mathbf{x}_t \in \mathbb{R}^{d_s}$, with $\mathbf{W}_B, \mathbf{W}_C \in \mathbb{R}^{(h \cdot d_s) \times d}$, where $h$ is the number of heads and $d_s$ the SSM channel dimension.

\paragraph{Decay.} $\alpha_t = \exp(-\text{softplus}(\mathbf{w}_\alpha^\top \mathbf{x}_t + b_\alpha)) \in (0, 1)$. We initialize $b_\alpha = -5$, giving a half-life of about 100 tokens, and learn a per-head rate via $\mathbf{w}_\alpha \in \mathbb{R}^{h \times d}$.

\paragraph{Phase.} $\theta_t = \mathbf{W}_\theta \mathbf{x}_t \in \mathbb{R}^{d_s/2}$ (data-dependent rotation frequencies), with $\mathbf{W}_\theta \in \mathbb{R}^{(h \cdot d_s/2) \times d}$ for per-head, per-dimension rotations.

\paragraph{Cumulative quantities (FP32).}
\begin{equation}
g_t = \textstyle\sum_{k \leq t} \log \alpha_k, \quad \Phi_t = \textstyle\sum_{k \leq t} \theta_k, \quad c = \frac{\max_t g_t + \min_t g_t}{2}
\end{equation}
$g_t$ is the log-decay accumulated from token 1 to $t$. The minimax offset $c$ keeps $|g-c|$ small, providing numerical stability for $e^{g-c}$. Both sums are implemented with \texttt{torch.cumsum}, an $O(L)$ operation.

\paragraph{Final channels.}
\begin{equation}
\bar{\mathbf{C}}_i = e^{g_i - c} \cdot R(\Phi_i) \cdot \mathbf{C}_i, \qquad
\bar{\mathbf{B}}_j = e^{-(g_j - c)} \cdot R(\Phi_j) \cdot \mathbf{B}_j
\end{equation}
where $R(\Phi)$ is the block-diagonal $2{\times}2$ rotation matrix. The two factors play complementary roles:
\begin{itemize}[leftmargin=*]
\item \textbf{Decay} $e^{g_i-c} \cdot e^{-(g_j-c)} = e^{g_i - g_j}$: how much the importance from $j$ persists to $i$. Larger distances or more important intermediate tokens lead to stronger decay. Unlike ALiBi's fixed slope, this depends on the content of the sequence.
\item \textbf{Rotation} $R(\Phi_i)^\top R(\Phi_j) = R(\Phi_i - \Phi_j)$: distinguishes positions that share similar decay but play different roles in the sequence. Whereas positional RoPE encodes absolute distance, the SSM rotation encodes a data-dependent relative position.
\end{itemize}

Positional RoPE is applied only to $\mathbf{q}, \mathbf{k}$; the SSM channels carry the data-dependent rotation defined above.

\begin{figure}[t]
\centering
\includegraphics[width=0.84\textwidth]{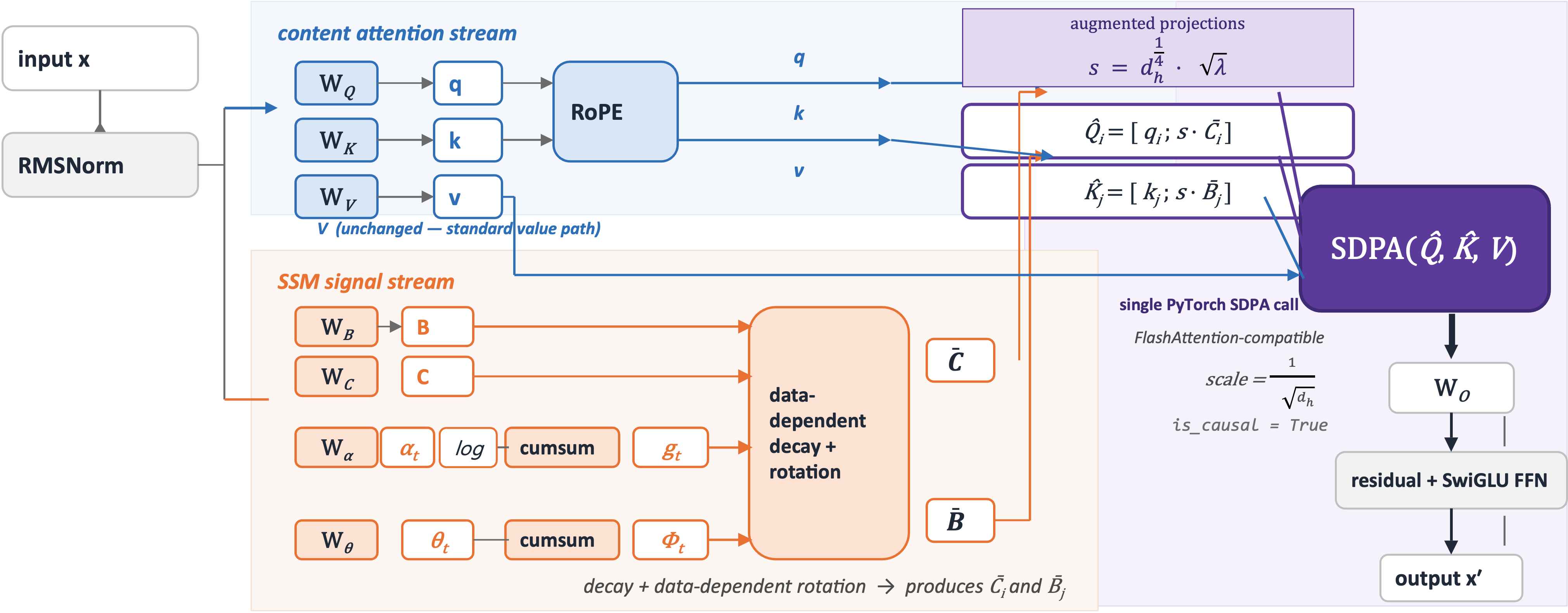}
\caption{Compute graph of a SISA layer. The attention path produces Q, K, V with positional RoPE; the SSM path accumulates decay and data-dependent rotations into $\bar{\mathbf{C}}, \bar{\mathbf{B}}$. The two are merged into augmented Q/K and consumed by a single PyTorch SDPA call.}
\label{fig:sisa_layer}
\end{figure}

\subsection{Augmented Q/K: Single-SDPA Realization}
\label{sec:augmented_qk}

\begin{proposition}
Let $s = d_h^{1/4}\sqrt{\lambda}$, $\hat{\mathbf{Q}}_i = [\mathbf{q}_i;\, s\bar{\mathbf{C}}_i]$, and $\hat{\mathbf{K}}_j = [\mathbf{k}_j;\, s\bar{\mathbf{B}}_j]$. Then $\hat{\mathbf{Q}}_i^\top \hat{\mathbf{K}}_j / \sqrt{d_h} = s_{ij}^{\text{SISA}}$.
\end{proposition}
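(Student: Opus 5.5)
The plan is to expand the inner product of the concatenated vectors blockwise. No earlier result is needed beyond the definition of $s_{ij}^{\text{SISA}}$ in Eq.~\eqref{eq:sisa_score}. For vectors written as concatenations $[\mathbf{a};\mathbf{b}]$ and $[\mathbf{c};\mathbf{d}]$ whose blocks have matching dimensions, the Euclidean inner product splits as $\mathbf{a}^\top\mathbf{c} + \mathbf{b}^\top\mathbf{d}$. Here $\mathbf{q}_i,\mathbf{k}_j\in\mathbb{R}^{d_h}$, and $\bar{\mathbf{C}}_i,\bar{\mathbf{B}}_j\in\mathbb{R}^{d_s}$ for the same head, so both blocks align. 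This gives
\begin{equation*}
\hat{\mathbf{Q}}_i^\top \hat{\mathbf{K}}_j = \mathbf{q}_i^\top\mathbf{k}_j + s^2\,\bar{\mathbf{C}}_i^\top\bar{\mathbf{B}}_j .
\end{equation*}

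Next I substitute $s^2 = (d_h^{1/4}\sqrt{\lambda})^2 = \sqrt{d_h}\,\lambda$. This step uses $\lambda>0$, which holds because it is a learnable positive scalar, so $\sqrt{\lambda}$ is real and the augmented vectors are real. Dividing by $\sqrt{d_h}$ then yields $\mathbf{q}_i^\top\mathbf{k}_j/\sqrt{d_h} + \lambda\,\bar{\mathbf{C}}_i^\top\bar{\mathbf{B}}_j$, which is exactly $s_{ij}^{\text{SISA}}$.

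There is no genuine obstacle. The only points to state carefully are the real-valuedness and dimension bookkeeping. First, $\bar{\mathbf{C}}_i$ and $\bar{\mathbf{B}}_j$ are real vectors, since the rotation $R(\Phi)$ is a real orthogonal block-diagonal matrix and the factors $e^{\pm(g-c)}$ are real. The inner product is therefore the ordinary real one, with no conjugation. Second, I should remark on the scale used by SDPA. The augmented head dimension is $d_h+d_s$, so the default SDPA scale $1/\sqrt{d_h+d_s}$ must be overridden by passing $\text{scale}=1/\sqrt{d_h}$. The identity as stated concerns division by $\sqrt{d_h}$, so this is an implementation note rather than part of the proof.

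Finally, I note that the causal constraint $i\ge j$ plays no role in the algebra. The identity holds entrywise for all $(i,j)$, and causality is then imposed by the mask.
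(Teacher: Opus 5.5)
Your proof is correct and matches the paper's argument: expand the inner product of the concatenated vectors blockwise to get $\mathbf{q}_i^\top\mathbf{k}_j + s^2\bar{\mathbf{C}}_i^\top\bar{\mathbf{B}}_j$, use $s^2=\sqrt{d_h}\,\lambda$, and divide by $\sqrt{d_h}$. Your side remarks on real-valuedness, the SDPA scale override, and the irrelevance of the causal mask to the algebra are accurate; the paper makes the scale point separately, in the text after the proposition and in the implementation-details appendix.
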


\begin{proof}
\begin{align}
\frac{\hat{\mathbf{Q}}_i^\top \hat{\mathbf{K}}_j}{\sqrt{d_h}} &= \frac{\mathbf{q}_i^\top \mathbf{k}_j + s^2 \bar{\mathbf{C}}_i^\top \bar{\mathbf{B}}_j}{\sqrt{d_h}}
= \frac{\mathbf{q}_i^\top \mathbf{k}_j}{\sqrt{d_h}} + \frac{s^2}{\sqrt{d_h}} \cdot \bar{\mathbf{C}}_i^\top \bar{\mathbf{B}}_j.
\end{align}
Since $s^2 = \sqrt{d_h} \cdot \lambda$, the second factor reduces to $\lambda$, yielding $s_{ij}^{\text{SISA}}$.
\end{proof}

The whole layer is therefore $\mathbf{Y} = \text{SDPA}(\hat{\mathbf{Q}}, \hat{\mathbf{K}}, \mathbf{V}, \text{scale}{=}1/\sqrt{d_h}, \text{causal}{=}\text{True})$. No $L{\times}L$ bias matrix is materialized, and FlashAttention works out of the box. This single-SDPA reduction requires the bias to have a fixed low-dimensional inner-product form: a scalar cumulative bias (such as FoX's $D_{ij}$) cannot be folded into bounded-dimension Q/K and must materialize the full $L{\times}L$ matrix with a modified kernel. Note that the SDPA scale must be $1/\sqrt{d_h}$, not $1/\sqrt{d_h+d_s}$. Compute cost matches a standard Transformer asymptotically (\textbf{$O(L^2(d_h{+}d_s))$ for SDPA plus $O(L)$ for cumulative sums}). Implementation details (SDPA scale, bf16 overflow guard, $\lambda$ in fp32) are deferred to Appendix~\ref{sec:impl_details}.

\subsection{Parameter Budget}

SISA adds the SSM projections ($\mathbf{W}_B, \mathbf{W}_C, \mathbf{w}_\alpha, \mathbf{W}_\theta, \lambda$); we offset the cost by reducing the FFN, keeping the parameter count equal to the Transformer baseline. The per-layer SSM overhead is
\begin{equation}
P_{\text{SSM}} = 2 \cdot d \cdot h \cdot d_s + d \cdot h + h + d \cdot h \cdot \frac{d_s}{2} + h.
\end{equation}
We shrink $d_{\text{ff}}$ to $d_{\text{ff}}^{\text{red}}$ so that $3 \cdot d \cdot (d_{\text{ff}} - d_{\text{ff}}^{\text{red}}) \approx P_{\text{SSM}}$ per layer (SwiGLU). \textbf{This FFN-vs-SSM allocation is SISA's central design trade-off}; the optimal balance is regime-dependent (Sections~\ref{sec:ffn_ratio}, \ref{sec:ds_ablation}).

\section{Experimental Setup}

\subsection{Models}

We compare four architectures at three scales, with parameter counts matched.

\begin{table}[ht]
\centering\small
\caption{Model configurations. Common settings: GPT-NeoX tokenizer (50,277), tied embeddings, RMSNorm, sequence length 2,048.}
\label{tab:configs}
\begin{tabular}{llrrr}
\toprule
\textbf{Model} & \textbf{Setting} & \textbf{50M} & \textbf{152M} & \textbf{369M} \\
\midrule
Transformer & $d/h/L/d_{\text{ff}}$ & 512/8/6/2048 & 768/12/12/3072 & 1024/16/24/2944 \\
SISA & $d/h/L/d_{\text{ff}}/d_s$ & 512/8/6/1832/32 & 768/12/12/2748/32 & 1024/16/24/2512/32 \\
Mamba-2 & $d/L/d_{\text{state}}$ & 512/14/64 & 768/31/64 & 1024/49/64 \\
Mamba-3 & $d/L/d_{\text{state}}$ & 512/14/64 & 768/30/64 & 1024/48/64 \\
\bottomrule
\end{tabular}
\end{table}

\subsection{Training}

All models are trained on 5B tokens of SlimPajama-6B~\citep{slimpajama}, giving tokens-per-parameter ratios of $100\times$ (50M), $33\times$ (152M), and $13.5\times$ (369M). Compared with the Chinchilla optimum~\citep{chinchilla} of about $20\times$, the 50M and 152M models are well-trained while the 369M is somewhat undertrained; since all architectures share the same token budget, the relative comparison remains fair.

Optimization: AdamW ($\beta_1{=}0.9$, $\beta_2{=}0.95$), weight decay 0.1, gradient clipping 1.0, cosine schedule with 500-step warmup, effective batch 524K tokens. Mixed precision bf16 on a single NVIDIA H100 80\,GB PCIe.

\paragraph{Protocol note: controlled vs.\ historical runs.} ``mb'' denotes the per-device micro-batch (effective batch fixed at 524K tokens via gradient accumulation). \textbf{Controlled (mb=4)} runs re-train all four architectures under the same micro-batch and harness and are used for primary ranking; \textbf{historical (mb=2 or mixed)} runs are kept for transparency and ablation context only. At 369M we disclose both, since the mb=4 protocol slightly favors Mamba-3 and penalizes attention-heavy SISA.

\subsection{Evaluation}

Five benchmarks evaluate complementary abilities:

\begin{table}[ht]
\centering\small
\caption{Benchmark summary.}
\begin{tabular}{llll}
\toprule
\textbf{Benchmark} & \textbf{Task} & \textbf{Ability tested} & \textbf{Random} \\
\midrule
LAMBADA~\citep{lambada} & last-word prediction & long-range comprehension & $\approx$0\% \\
NIAH & retrieve hidden info & retrieval / retention & $\approx$0\% \\
HellaSwag~\citep{hellaswag} & sentence completion (4-way) & commonsense reasoning & 25\% \\
ARC-Easy~\citep{arc} & science QA (4-way) & factual knowledge & 25\% \\
WinoGrande~\citep{winogrande} & pronoun resolution (2-way) & coreference & 50\% \\
\bottomrule
\end{tabular}
\end{table}

NIAH (Needle-in-a-Haystack) is our 200-trial test: a sentence ``The secret number is 42.'' is inserted at a random position inside filler text of length 200--1800, and we measure whether the model retrieves it. Five seeds (42, 123, 456, 789, 1024) are used.

\section{Results}

\subsection{Main Comparison at 152M}

\begin{table}[ht]
\centering
\caption{Final results at 152M (5B tokens, step 9{,}536) (\%). \textbf{Bold}: best, \underline{underline}: second. Four $d_s$ values are compared. \textit{Most numbers are our mb=4 retrain (see Section~4.2 for protocol); the $d_s{=}32$ (mb=2) row is the historical baseline kept here for context. Other mb=2 results are in the appendix.}}
\label{tab:main}
\begin{tabular}{lccccc}
\toprule
& \textbf{LAMBADA-greedy}$\uparrow$ & \textbf{NIAH}$\uparrow$ & \textbf{HellaSwag}$\uparrow$ & \textbf{ARC-E}$\uparrow$ & \textbf{WinoG}$\uparrow$ \\
\midrule
Transformer & 13.9 & \textbf{100.0} & 25.4 & 33.3 & 51.2 \\
Mamba-2 & 12.7 & 82.5 & 26.5 & \textbf{36.4} & 51.4 \\
Mamba-3 & 15.5 & 99.0 & 26.0 & 34.9 & \textbf{52.7} \\
\midrule
\textbf{SISA $d_s = 16$} & \textbf{17.3} & \textbf{100.0} & \textbf{26.9} & 34.7 & \underline{52.5} \\
SISA $d_s = 32$ (mb=2) & 16.1 & \textbf{100.0} & \underline{26.7} & \underline{35.8} & 51.8 \\
SISA $d_s = 64$ & \underline{16.8} & \textbf{100.0} & 26.6 & 35.4 & 50.8 \\
SISA $d_s = 128$ & 16.8 & \textbf{100.0} & 26.0 & 35.0 & 51.9 \\
\bottomrule
\end{tabular}
\end{table}

\textbf{Headline.} SISA $d_s = 16$ raises LAMBADA-greedy to \textbf{17.3 (vs.\ Transformer 13.9, Mamba-3 15.5)} while keeping NIAH at 100\%---topping three of five benchmarks at 152M and essentially tying Mamba-3 on WinoGrande. Specifically:

\begin{itemize}[leftmargin=*]
\item \textbf{vs.\ Transformer (LAMBADA 13.9):} $+3.4$\,pp ($+24.5\%$ relative)---a clear architectural improvement.
\item \textbf{vs.\ Mamba-3 (LAMBADA 15.5):} $+1.8$\,pp ($+11.6\%$)---outperforming the strongest SSM baseline in our comparison.
\item \textbf{vs.\ Mamba-2 (LAMBADA 12.7):} $+4.6$\,pp ($+36.2\%$).
\end{itemize}

\textbf{NIAH convergence.} SISA reaches 100\% by step 1{,}000 (10\% of training). Mamba-3 (mb=4 retrain) climbs quickly to 96.5\% at step 3K but oscillates back to 86.0\% at 5K and only stabilizes at 99.0\% by step 8K---SISA's consistency advantage stems from preserving Transformer's attention-based retrieval.

\textbf{Trade-off.} Mamba-2 (36.4\%) and the mb=2 $d_s = 32$ variant (35.8\%) edge out SISA $d_s = 16$ (34.7\%) on ARC-Easy, suggesting that ARC-Easy's factual reasoning benefits from SSM's cumulative pattern aggregation (Section~\ref{sec:bench_analysis}). Even so, SISA still beats Transformer (33.3\%) by $+1.4$\,pp.

\subsection{NIAH: Perfect Retrieval From Early Training}

\begin{table}[ht]
\centering
\caption{NIAH accuracy by training step (\%); our mb=4 retrain. SISA reaches 100\% from step 1K (10\% of training).}
\label{tab:niah}
\begin{tabular}{lrrrrrr}
\toprule
& \textbf{1K} & \textbf{2K} & \textbf{3K} & \textbf{5K} & \textbf{7K} & \textbf{Final} \\
\midrule
Transformer & 61.5 & 78.0 & 93.5 & 98.5 & 100.0 & 100.0 \\
\textbf{SISA} & \textbf{100.0} & \textbf{100.0} & \textbf{100.0} & \textbf{100.0} & \textbf{100.0} & \textbf{100.0} \\
Mamba-2 & 0.0 & 6.5 & 42.0 & 71.0 & 78.5 & 82.5 \\
Mamba-3 & 0.0 & 61.0 & 96.5 & 86.0 & 97.5 & 99.0 \\
\bottomrule
\end{tabular}
\end{table}

\begin{figure}[t]
\centering
\includegraphics[width=0.84\textwidth]{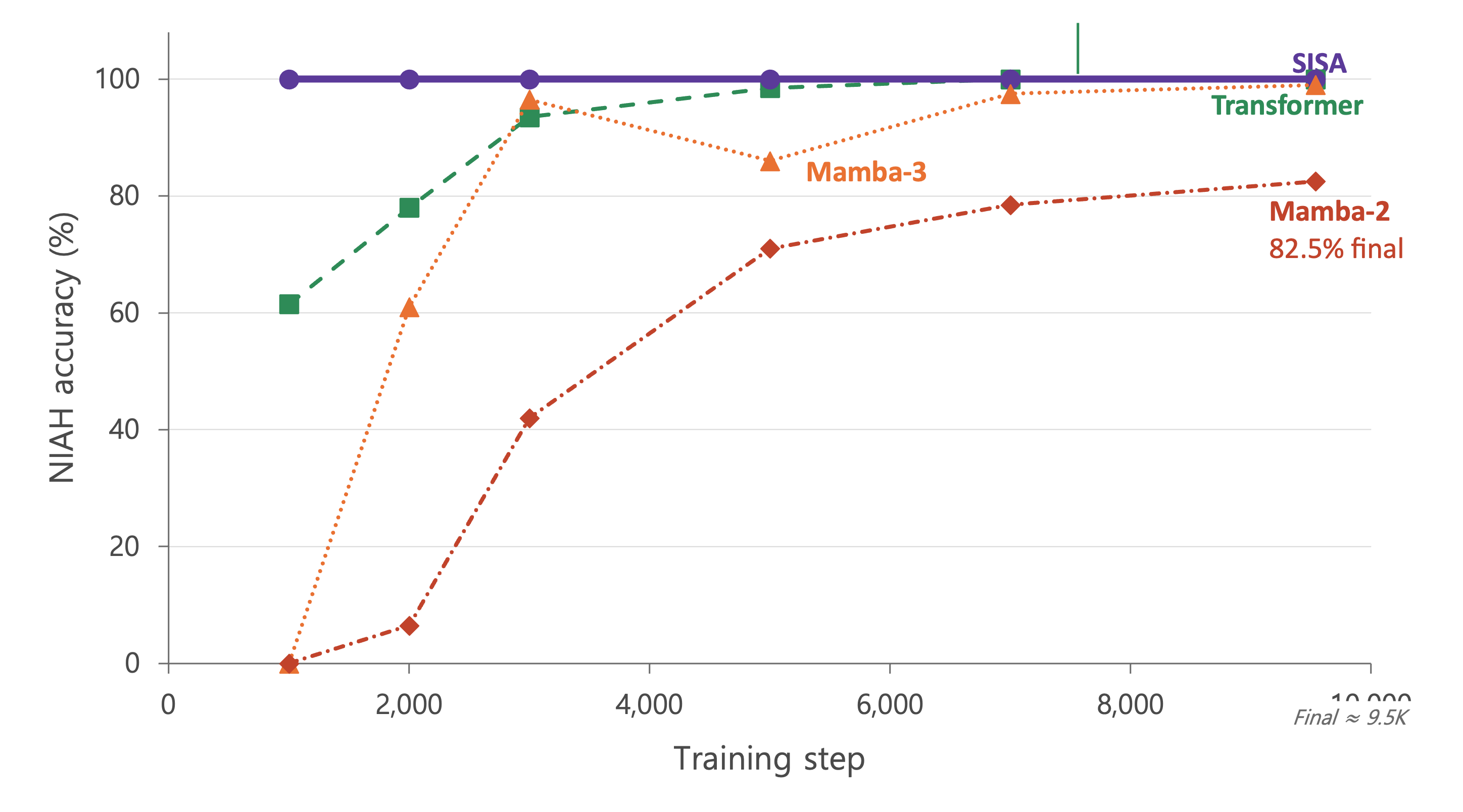}
\caption{NIAH convergence at 152M. SISA reaches 100\% from step 1K, while Transformer and the Mamba family take much longer; SISA preserves retrieval throughout training.}
\label{fig:niah_convergence}
\end{figure}

\textbf{SISA reaches NIAH 100\% by step 1K---$7\times$ faster than Transformer; Mamba never gets there.} At step 1K, Transformer is at 61.5\% and Mamba at 0\%. This suggests SISA supplies an \textbf{architectural retrieval bias} before semantic representations fully mature: the SSM-derived score bias mirrors the sequence's positional structure and gives attention positional priorities early in training.

Transformer needs step 7{,}000 (73\% of training) to reach 100\%, while Mamba-2 plateaus at 82.5\% even at the end of training: information that linear recurrence has decayed cannot be brought back. Combining attention's global access with SSM's importance tracking yields the structural advantage SISA exhibits here.

\paragraph{Long-context NIAH (summary).} Beyond the training length of 2,048, all attention-based models fail because of RoPE extrapolation limits. At the training length itself, however, \textbf{SISA reaches 91\% versus Transformer's 82\%}: the SSM bias helps retrieval at the boundary. Full results across 50M/152M and lengths 256--16384 are in Appendix~\ref{sec:long_niah_appendix}.

\subsection{Scaling Across Three Sizes}

\begin{table}[ht]
\centering\small
\caption{Final results across three scales (\%). The $d_s$ value of SISA is the per-scale optimum from our ablation, which follows a non-monotonic pattern (50M: $d_s=64$, 152M: $d_s=16$, 369M: $d_s=128$). All models trained on the same 5B tokens. \textit{The 50M/152M block and the (mb=4) rows for 369M use our mb=4 controlled protocol (Section~4.2); (mb=2) rows at 369M are reported as historical context from the earlier setting.}}
\label{tab:scaling}
\begin{tabular}{clccccc}
\toprule
\textbf{Scale} & & \textbf{LAMBADA-greedy} & \textbf{NIAH} & \textbf{HSwag} & \textbf{ARC-E} & \textbf{WinoG} \\
\midrule
\multirow{4}{*}{\textbf{50M}} & Transformer & 13.4 & \textbf{100.0} & 25.2 & 33.1 & 51.3 \\
& \textbf{SISA $d_s = 64$} (best) & \textbf{14.4} & \textbf{100.0} & \textbf{25.2} & 32.8 & \textbf{52.4} \\
& Mamba-2 & 5.6 & 0.0 & 24.8 & 33.0 & 50.8 \\
& Mamba-3 & 12.7 & 63.5 & 25.1 & \textbf{33.9} & 51.1 \\
\midrule
\multirow{4}{*}{\textbf{152M}} & Transformer & 13.9 & \textbf{100.0} & 25.4 & 33.3 & 51.2 \\
& \textbf{SISA $d_s = 16$} (best) & \textbf{17.3} & \textbf{100.0} & \textbf{26.9} & 34.7 & \underline{52.5} \\
& Mamba-2 & 12.7 & 82.5 & 26.5 & \textbf{36.4} & 51.4 \\
& Mamba-3 & 15.5 & 99.0 & 26.0 & 34.9 & \textbf{52.7} \\
\midrule
\multirow{7}{*}{\textbf{369M}} & Transformer (mb=4) & 15.4 & \textbf{100.0} & 26.5 & 35.1 & \textbf{53.3} \\
& SISA $d_s = 32$ (mb=2) & 15.4 & 99.5 & 26.4 & 34.6 & 51.9 \\
& SISA $d_s = 32$ (mb=4) & 14.0 & \textbf{100.0} & 26.0 & 33.9 & 51.4 \\
& SISA $d_s = 64$ (mb=4/8) & 13.1 & \textbf{100.0} & 25.7 & 34.4 & 51.8 \\
& \textbf{SISA $d_s = 128$} (mb=4) & 14.8 & \textbf{100.0} & 25.9 & 34.7 & 51.3 \\
& Mamba-2 (mb=4) & 10.5 & 96.5 & 26.2 & 34.9 & 51.5 \\
& Mamba-3 (mb=2) & \textbf{17.1} & \textbf{100.0} & \textbf{27.9} & 35.4 & 51.5 \\
& Mamba-3 (mb=4) & \textbf{17.4} & 86.5 & 26.8 & 35.0 & 50.7 \\
\bottomrule
\end{tabular}
\end{table}

\paragraph{50M ($100$ tok/param).} SISA with $d_s$ tuned to 64 achieves LAMBADA 14.4\%, beating Transformer (13.4) by $+1.0$\,pp and Mamba-3 (12.7) by $+1.7$\,pp. WinoGrande (52.4) is also the best across all models. The earlier observation ``SISA $\approx$ Transformer at 50M'' was specific to the fixed $d_s = 32$; with the right $d_s$, SISA wins even at 50M.

\paragraph{152M ($33$ tok/param).} SISA $d_s = 16$ tops LAMBADA at 17.3\%---first place across all models, beating Transformer ($+3.4$\,pp), Mamba-3 ($+1.8$\,pp), and Mamba-2 ($+4.6$\,pp). NIAH (100\%) and HellaSwag (26.9) are also best, putting SISA ahead of the strongest SSM (Mamba-3) on three benchmarks; WinoGrande (52.5) is essentially tied with Mamba-3 (52.7).

\paragraph{369M ($13.5$ tok/param).} A retrain with $d_s$ extended to 128 (SSM 34\%, FFN 24\%) reaches LAMBADA 14.8\% at step 9K. \textbf{Mamba-3 leads on LAMBADA} (mb=4 retrain 17.4, mb=2 17.1), $+2.6$\,pp above SISA $d_s = 128$ (14.8); the full SSM recurrence is more sample-efficient for long-tail context tracking. SISA still wins on (1) NIAH (100\% throughout vs.\ Mamba-3 mb=4 86.5\%), (2) FlashAttention compatibility and $+25\%$ throughput, and (3) parity with Transformer at the mb=2 setting (SISA $d_s{=}32$ and Transformer both 15.4). \textbf{SISA and Mamba-3 emerge as architectural choices with complementary strengths}, especially when retrieval or infrastructure compatibility matters.

\begin{figure}[t]
\centering
\includegraphics[width=\textwidth]{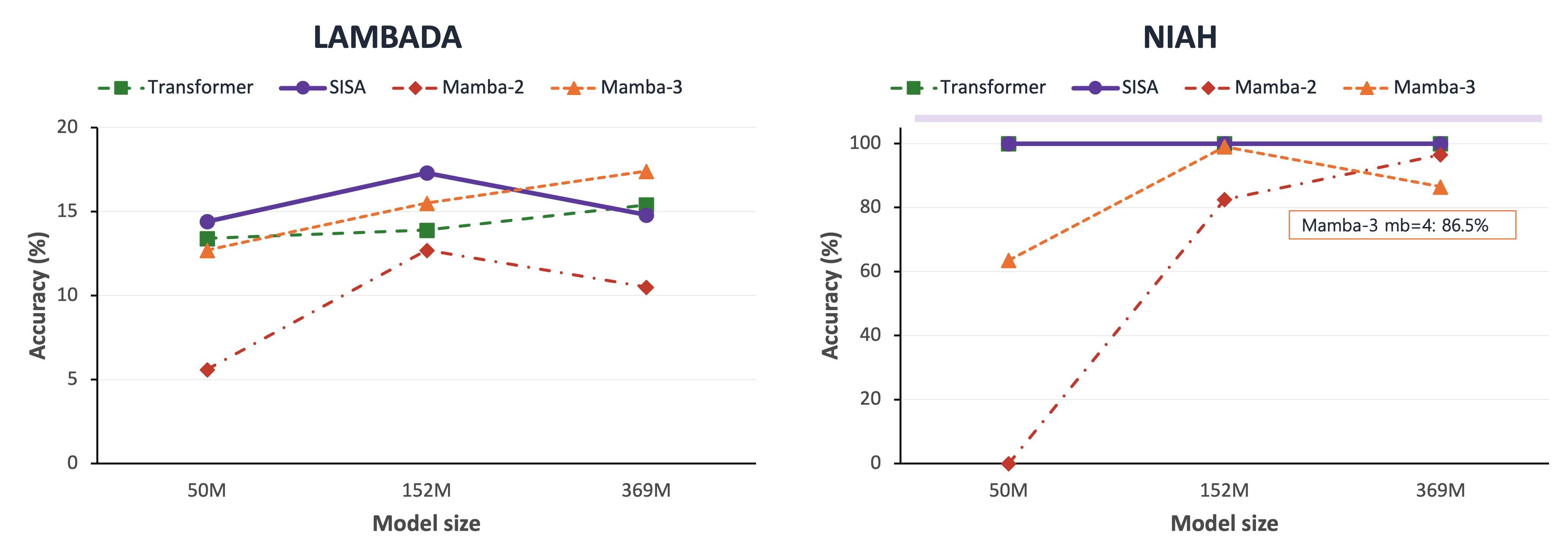}
\caption{Scaling behavior across model sizes. SISA peaks on LAMBADA at 152M; Mamba-3 overtakes it at 369M. On NIAH, SISA and Transformer stay at 100\% across all scales.}
\label{fig:scaling_behavior}
\end{figure}

\subsection{Throughput}

\begin{table}[ht]
\centering
\caption{Training throughput at 152M on a single H100.}
\label{tab:throughput}
\begin{tabular}{lrrl}
\toprule
& \textbf{tok/s} & \textbf{rel.} & \textbf{Core operation} \\
\midrule
Transformer & 27{,}714 & 1.00$\times$ & SDPA $\times 1$ \\
\textbf{SISA} & 16{,}783 & 0.61$\times$ & SDPA $\times 1$ (augmented) \\
Mamba-2 & 10{,}719 & 0.39$\times$ & SSD $\times 1$ \\
Mamba-3 & 13{,}460 & 0.49$\times$ & official Mamba-3 kernel \\
\bottomrule
\end{tabular}
\end{table}

\textbf{SISA gives the strongest SSM-variant benchmarks while running $1.25\times$ faster than Mamba-3.} The $39\%$ slowdown vs.\ Transformer comes from the augmented inner dimension ($d_h{+}d_s = 96$ vs.\ $d_h = 64$); SDPA cost scales linearly with this dimension, with overhead independent of sequence length. Mamba-2 and Mamba-3 need bespoke CUDA kernels; SISA runs on PyTorch's stock SDPA.

\section{Analysis}

\subsection{The SSM Bias as an Importance Lens}

\textbf{The SSM channels are not memory---they are an importance lens for attention.} They store no information and are recomputed from the input at every layer; their job is purely to tell attention which earlier tokens matter when weighting key--value pairs.

SISA has no recurrent state, no convolution, and no sequential processing. The name ``SSM'' reflects only the mathematical lineage---the actual implementation is linear projections, a cumulative sum, and SDPA. Structurally, SISA is \textbf{a pure Transformer with an additive score bias}.

\begin{itemize}[leftmargin=*]
\item \textbf{Decay} $e^{g_i-c} \cdot e^{-(g_j-c)}$: data-dependent distance weighting between positions $i$ and $j$. Unlike ALiBi's fixed slope, it varies with the input.
\item \textbf{Rotation} $R(\Phi_i)^\top R(\Phi_j)$: distinguishes positions whose decay is similar but whose role in the sequence differs.
\end{itemize}

Together, $\bar{\mathbf{C}}_i^\top \bar{\mathbf{B}}_j$ scores \textbf{how strongly the sequence flow has marked token $j$ as important to position $i$}---a sequential prior that pure content attention cannot produce.

\subsection{Importance Signal Diluted by Softmax}
\label{sec:bench_analysis_main}

Why is SISA $d_s = 16$'s edge dramatic on LAMBADA ($+3.4$\,pp) but only modest on HellaSwag ($+1.5$\,pp)? \textbf{Softmax dilutes the additive bias more on tasks where many tokens are roughly equally relevant.} Softmax forces attention weights to sum to one, so when the content score $\mathbf{q}^\top \mathbf{k} / \sqrt{d_h}$ is already strong, the additive SSM bias gets relatively diluted by normalization. The bias matters most on tasks like LAMBADA where ``which token is the right answer'' is decisive, and least where many tokens are roughly equally relevant. Sigmoid attention~\citep{sigmoid_attn} would in principle remove this dilution; we offer this as a hypothesis to be verified in SISA-2.

\section{Conclusion}

We introduced \textbf{score-level fusion}---a third integration point beyond block- and head-level---realized as SISA. To our knowledge, this \emph{vector-valued data-dependent score bias} is novel among SSM-attention hybrids (Appendix~\ref{sec:hybrid_survey}).

\textbf{Results.} (1) At 152M, SISA gives the strongest LAMBADA-greedy ($+24\%$ vs.\ Transformer, $+12\%$ vs.\ Mamba-3). (2) NIAH 100\% from step 1K, $7\times$ faster than Transformer; neither Mamba variant reaches 100\%. (3) Single SDPA, FlashAttention-compatible, $+25\%$ throughput vs.\ Mamba-3. (4) At 369M, Mamba-3 leads LAMBADA while SISA preserves NIAH and stock SDPA---complementary strengths. (5) $d_s$ ablation across three scales quantifies the SSM/FFN trade-off. \textbf{Limitations}: 369M max, 5B tokens, Chinchilla-undertrained at 369M, softmax dilution (sigmoid $\to$ SISA-2), RoPE limit 2{,}048, $39\%$ throughput overhead (see Apps.~\ref{sec:reproducibility}--\ref{sec:future_work}). \textbf{Score-level fusion identifies the attention score itself as a natural interface between Transformers and SSMs, opening a third design axis for hybrid sequence models.}

\bibliographystyle{unsrtnat}

\begin{thebibliography}{99}
\bibitem[Vaswani et al.(2017)]{vaswani2017attention} Vaswani, A., et al. (2017). Attention is all you need. \emph{NeurIPS}.
\bibitem[Gu \& Dao(2023)]{gu2023mamba} Gu, A. \& Dao, T. (2023). Mamba: Linear-time sequence modeling with selective state spaces. arXiv:2312.00752.
\bibitem[Dao \& Gu(2024)]{dao2024mamba2} Dao, T. \& Gu, A. (2024). Transformers are SSMs. \emph{ICML}. arXiv:2405.21060.
\bibitem[Lahoti et al.(2026)]{mamba3} Lahoti, A., Li, K. Y., Chen, B., Wang, C., Bick, A., Kolter, J. Z., Dao, T., \& Gu, A. (2026). Mamba-3: Improved Sequence Modeling using State Space Principles. \emph{ICLR}. arXiv:2603.15569.
\bibitem[Lieber et al.(2024)]{jamba} Lieber, O., et al. (2024). Jamba. arXiv:2403.19887.
\bibitem[Ren et al.(2024)]{samba} Ren, L., et al. (2024). Samba. \emph{ICLR 2025}. arXiv:2406.07522.
\bibitem[Dong et al.(2024)]{hymba} Dong, X., Fu, Y., Diao, S., Byeon, W., Chen, Z., et al. (2024). Hymba: A Hybrid-head Architecture for Small Language Models. \emph{ICLR 2025}. arXiv:2411.13676.
\bibitem[Zuo et al.(2025)]{falconh1} Zuo, J., et al. (2025). Falcon-H1: A Family of Hybrid-Head Language Models Redefining Efficiency and Performance. arXiv:2507.22448.
\bibitem[De et al.(2024)]{griffin} De, S., Smith, S. L., Fernando, A., et al. (2024). Griffin: Mixing Gated Linear Recurrences with Local Attention for Efficient Language Models. arXiv:2402.19427.
\bibitem[Press et al.(2022)]{alibi} Press, O., Smith, N. A., \& Lewis, M. (2022). Train Short, Test Long: Attention with Linear Biases Enables Input Length Extrapolation. \emph{ICLR}. arXiv:2108.12409.
\bibitem[Raffel et al.(2020)]{raffel2020t5} Raffel, C., Shazeer, N., Roberts, A., Lee, K., Narang, S., Matena, M., Zhou, Y., Li, W., \& Liu, P. J. (2020). Exploring the Limits of Transfer Learning with a Unified Text-to-Text Transformer. \emph{JMLR}, 21(140), 1--67.
\bibitem[Zheng et al.(2024)]{dape} Zheng, C., Gao, Y., Shi, H., Huang, M., Li, J., Xiong, J., Ren, X., Ng, M., Jiang, X., Li, Z., \& Li, Y. (2024). DAPE: Data-Adaptive Positional Encoding for Length Extrapolation. \emph{NeurIPS}. arXiv:2405.14722.
\bibitem[Lin et al.(2025)]{fox} Lin, Z., Nikishin, E., He, X. O., \& Courville, A. (2025). Forgetting Transformer: Softmax Attention with a Forget Gate. \emph{ICLR}. arXiv:2503.02130.
\bibitem[Ramapuram et al.(2024)]{sigmoid_attn} Ramapuram, J., Danieli, F., Dhekane, E., Weers, F., Busbridge, D., Ablin, P., Likhomanenko, T., Digani, J., Gu, Z., Shidani, A., \& Webb, R. (2024). Theory, Analysis, and Best Practices for Sigmoid Self-Attention. \emph{ICLR 2025}. arXiv:2409.04431.
\bibitem[Soboleva et al.(2023)]{slimpajama} Soboleva, D., Al-Khateeb, F., Myers, R., Steeves, J. R., Hestness, J., \& Dey, N. (2023). SlimPajama: A 627B token cleaned and deduplicated version of RedPajama. \emph{Cerebras Systems}.
\bibitem[Hoffmann et al.(2022)]{chinchilla} Hoffmann, J., et al. (2022). Training compute-optimal large language models. arXiv:2203.15556.
\bibitem[Paperno et al.(2016)]{lambada} Paperno, D., Kruszewski, G., Lazaridou, A., Pham, Q. N., Bernardi, R., Pezzelle, S., Baroni, M., Boleda, G., \& Fern{\'a}ndez, R. (2016). The LAMBADA Dataset: Word Prediction Requiring a Broad Discourse Context. \emph{ACL}.
\bibitem[Zellers et al.(2019)]{hellaswag} Zellers, R., Holtzman, A., Bisk, Y., Farhadi, A., \& Choi, Y. (2019). HellaSwag: Can a Machine Really Finish Your Sentence? \emph{ACL}.
\bibitem[Clark et al.(2018)]{arc} Clark, P., Cowhey, I., Etzioni, O., Khot, T., Sabharwal, A., Schoenick, C., \& Tafjord, O. (2018). Think you have Solved Question Answering? Try ARC, the AI2 Reasoning Challenge. arXiv:1803.05457.
\bibitem[Sakaguchi et al.(2020)]{winogrande} Sakaguchi, K., Le Bras, R., Bhagavatula, C., \& Choi, Y. (2020). WinoGrande: An Adversarial Winograd Schema Challenge at Scale. \emph{AAAI}.
\bibitem[Glorioso et al.(2024a)]{zamba} Glorioso, P., Anthony, Q., Tokpanov, Y., Whittington, J., Pilault, J., Ibrahim, A., \& Millidge, B. (2024a). Zamba: A Compact 7B SSM Hybrid Model. arXiv:2405.16712.
\bibitem[Glorioso et al.(2024b)]{zamba2} Glorioso, P., Anthony, Q., Tokpanov, Y., Golubeva, A., Shyam, V., Whittington, J., Pilault, J., \& Millidge, B. (2024b). The Zamba2 Suite: Technical Report. arXiv:2411.15242.
\bibitem[NVIDIA(2025)]{nemotronh} NVIDIA. (2025). Nemotron-H: A Family of Accurate and Efficient Hybrid Mamba-Transformer Models. arXiv:2504.03624.
\bibitem[Botev et al.(2024)]{recurrentgemma} Botev, A., De, S., Smith, S. L., et al. (2024). RecurrentGemma: Moving Past Transformers for Efficient Open Language Models. arXiv:2404.07839.
\bibitem[Waleffe et al.(2024)]{mamba2hybrid} Waleffe, R., Byeon, W., Riach, D., Norick, B., Korthikanti, V., et al. (2024). An Empirical Study of Mamba-based Language Models. arXiv:2406.07887.
\bibitem[IBM(2024)]{bamba} IBM, Princeton, CMU, \& UIUC. (2024). Bamba: Inference-Efficient Hybrid Mamba2 Model. Hugging Face release, December 18, 2024. \url{https://huggingface.co/blog/bamba}.
\end{thebibliography}

\appendix

\section{Per-Layer Parameter Budget}

\begin{table}[ht]
\centering\small
\caption{Per-layer parameter comparison at 152M.}
\begin{tabular}{lrr}
\toprule
\textbf{Component} & \textbf{Transformer} & \textbf{SISA} \\
\midrule
$\mathbf{W}_Q + \mathbf{W}_K + \mathbf{W}_V + \mathbf{W}_O$ & 2{,}359{,}296 & 2{,}359{,}296 \\
SSM projections ($\mathbf{W}_B, \mathbf{W}_C, \mathbf{w}_\alpha, \mathbf{W}_\theta, \lambda$) & 0 & 746{,}508 \\
SwiGLU ($\mathbf{W}_{\text{gate}}, \mathbf{W}_{\text{up}}, \mathbf{W}_{\text{down}}$) & 7{,}077{,}888 & 6{,}331{,}392 \\
RMSNorm $\times 2$ & 1{,}536 & 1{,}536 \\
\midrule
\textbf{Per-layer total} & 9{,}438{,}720 & 9{,}438{,}732 \\
\textbf{Whole model (12L + Emb + Norm)} & 151{,}878{,}144 & 151{,}878{,}432 \\
\textbf{Difference} & \multicolumn{2}{c}{288 params (0.0002\%)} \\
\bottomrule
\end{tabular}
\end{table}

\section{369M Parameter Budget}

\begin{table}[ht]
\centering\small
\caption{Parameter breakdown at 369M.}
\begin{tabular}{lrr}
\toprule
\textbf{Component} & \textbf{Transformer} & \textbf{SISA ($d_s{=}32$)} \\
\midrule
Attention ($\mathbf{W}_Q, \mathbf{W}_K, \mathbf{W}_V, \mathbf{W}_O$) & 100.7M & 100.7M \\
SSM projections & 0 & 31.9M \\
FFN (SwiGLU) & 217.1M & 185.2M \\
Embeddings + Norm & 51.5M & 51.5M \\
\midrule
\textbf{Total} & 369.3M & 369.3M \\
\textbf{FFN reduction} & --- & \textbf{14.7\%} \\
\bottomrule
\end{tabular}
\end{table}

\section{Implementation Details}
\label{sec:impl_details}

This section collects three implementation details that are essential for reproducing SISA correctly.

\paragraph{1. SDPA scale.} Even though the augmented Q/K have dimension $d_h + d_s$, the SDPA scale must be $1/\sqrt{d_h}$, \emph{not} $1/\sqrt{d_h+d_s}$. Using PyTorch \texttt{F.scaled\_dot\_product\_attention}'s default $1/\sqrt{d_h+d_s}$ breaks the equivalence in the proposition of Section~\ref{sec:augmented_qk}, and the SISA score is no longer computed correctly:
\[
\frac{\hat{\mathbf{Q}}_i^\top \hat{\mathbf{K}}_j}{\sqrt{d_h}} = \frac{\mathbf{q}_i^\top \mathbf{k}_j}{\sqrt{d_h}} + \lambda \cdot \bar{\mathbf{C}}_i^\top \bar{\mathbf{B}}_j \quad \text{(correct)},
\]
\[
\frac{\hat{\mathbf{Q}}_i^\top \hat{\mathbf{K}}_j}{\sqrt{d_h + d_s}} \neq s_{ij}^{\text{SISA}} \quad \text{(SISA score breaks)}.
\]

\paragraph{2. bf16 numerical stability.} If $e^{g_i - c}$ exceeds the bf16 max ($65{,}504$), it produces NaN. The minimax offset $c = (\max_t g_t + \min_t g_t)/2$ usually keeps $|g-c|$ small, but during early-warmup spikes of $\alpha$ it may grow temporarily. We clamp $g - c$ to $[-11, 11]$ ($e^{11} \approx 59{,}874 < 65{,}504$). The clamp does not activate in normal training and has no effect on optimization.

\paragraph{3. $\lambda$ in fp32.} $\lambda_{\text{raw}}$ must be kept in fp32 to train. In bf16, the smallest representable change near $-1.0$ is $0.0078$, far larger than the $\sim 10^{-5}$ AdamW updates, so $\lambda$ never moves from its initialization. Quantitative impact is reported in Appendix~\ref{sec:lambda}.

\section{Mamba-3 Implementation}

We use the official Mamba-3 implementation now included in the \texttt{mamba-ssm} package. Earlier experiments used a two-SSD decomposition (two calls to the Mamba-2 SSD kernel) because no official version was available. The official implementation is $4.8\times$ faster, and we observed an LAMBADA gap of $+1.5$\,pp from the implementation switch alone, so we adopted it for all reported numbers.

\section{Multi-Seed NIAH}

Because NIAH is generated synthetically, we verified it with five seeds (42, 123, 456, 789, 1024):

\begin{table}[ht]
\centering\small
\caption{Multi-seed NIAH results at 152M (final).}
\begin{tabular}{lcc}
\toprule
& \textbf{Mean} & \textbf{Std.} \\
\midrule
SISA & 99.9\% & $\pm$ 0.2\% \\
Transformer & 100.0\% & $\pm$ 0.0\% \\
Mamba-2 & 85.3\% & $\pm$ 3.4\% \\
Mamba-3 & 98.5\% & $\pm$ 0.6\% \\
\bottomrule
\end{tabular}
\end{table}

\section{Full Per-Step Benchmark}

\begin{table}[ht]
\centering\small
\caption{152M, step 1K--final, full results (\%); our mb=4 retrain. $^*$ The mb=4 retrain of SISA $d_s{=}32$~v2 was run only to step 7K, so its ``Final'' column reports the step-7K value. The mb=2 v2 measurement at step 9{,}536 (17.7) appears in Table~\ref{tab:v1v2_appendix}.}
\begin{tabular}{clccccc}
\toprule
\textbf{Step} & \textbf{Model} & \textbf{LAMBADA} & \textbf{NIAH} & \textbf{HSwag} & \textbf{ARC-E} & \textbf{WinoG} \\
\midrule
\multirow{4}{*}{1K}
& Transformer & 3.5 & 61.5 & 25.3 & 28.3 & 48.5 \\
& \textbf{SISA} & \textbf{8.4} & \textbf{100.0} & 24.8 & 30.2 & 50.2 \\
& Mamba-2 & 1.0 & 0.0 & 23.5 & 31.3 & 50.0 \\
& Mamba-3 & 1.8 & 0.0 & 24.0 & 31.0 & 50.2 \\
\midrule
\multirow{4}{*}{3K}
& Transformer & 6.2 & 93.5 & 25.6 & 29.9 & 49.2 \\
& \textbf{SISA} & \textbf{14.4} & \textbf{100.0} & 25.6 & 34.1 & 52.0 \\
& Mamba-2 & 10.2 & 42.0 & 25.5 & 34.7 & 50.6 \\
& Mamba-3 & 12.0 & 96.5 & 25.4 & 32.7 & 51.6 \\
\midrule
\multirow{4}{*}{5K}
& Transformer & 7.2 & 98.5 & 26.2 & 31.3 & 50.7 \\
& \textbf{SISA} & \textbf{15.8} & \textbf{100.0} & 26.2 & 35.4 & 51.5 \\
& Mamba-2 & 12.1 & 71.0 & 26.5 & 36.0 & 50.7 \\
& Mamba-3 & 13.9 & 86.0 & 25.8 & 34.1 & 50.8 \\
\midrule
\multirow{4}{*}{Final}
& Transformer & 13.9 & 100.0 & 25.4 & 33.3 & 51.2 \\
& \textbf{SISA}$^*$ & \textbf{17.4} & \textbf{100.0} & 26.6 & 35.5 & 52.3 \\
& Mamba-2 & 12.7 & 82.5 & 26.5 & \textbf{36.4} & 51.4 \\
& Mamba-3 & 15.5 & 99.0 & 26.0 & 34.9 & \textbf{52.7} \\
\bottomrule
\end{tabular}
\end{table}

\section{FLOPs Analysis}

\begin{table}[ht]
\centering\small
\caption{GFLOPs per step at 152M (micro\_batch{=}2, seq\_len{=}2048).}
\begin{tabular}{lrr}
\toprule
& \textbf{GFLOPs/step} & \textbf{rel.} \\
\midrule
Transformer & 1{,}082 & 1.00$\times$ \\
SISA & 1{,}160 & 1.07$\times$ \\
\bottomrule
\end{tabular}
\end{table}

The FLOPs overhead of SISA is 7\%, attributable to the wider inner dimension in SDPA.

\section{Long-Context NIAH Across Scales}
\label{sec:long_niah_appendix}

\begin{table}[ht]
\centering\small
\caption{NIAH (\%) by context length, 50M and 152M. Training length is 2,048.}
\begin{tabular}{l|rrrr|rrrr}
\toprule
& \multicolumn{4}{c|}{\textbf{50M}} & \multicolumn{4}{c}{\textbf{152M}} \\
\textbf{L} & Trans & SISA & M-2 & M-3 & Trans & SISA & M-2 & M-3 \\
\midrule
256 & 100 & 100 & 0 & 95 & 100 & 100 & 88 & 100 \\
512 & 100 & 100 & 0 & 68 & 100 & 100 & 81 & 94 \\
1024 & 100 & 100 & 0 & 9 & 100 & 100 & 61 & 68 \\
1536 & 100 & 100 & 0 & 0 & 100 & 100 & 39 & 34 \\
2048 & 88 & 83 & 0 & 0 & 82 & \textbf{91} & 21 & 11 \\
3072 & 0 & 4 & 0 & 0 & 0 & 0 & 9 & 0 \\
\bottomrule
\end{tabular}
\end{table}

Two observations:
\begin{itemize}[leftmargin=*]
\item \textbf{SISA improves with scale at L=2048:} 50M (83\%) $\to$ 152M (\textbf{91\%}). Boundary retrieval also improves with model size.
\item \textbf{SISA vs.\ Transformer at L=2048:} at 50M, SISA (83\%) $<$ Transformer (88\%); at 152M, SISA (\textbf{91\%}) $>$ Transformer (82\%). The benefit of the SSM bias becomes clear with scale.
\end{itemize}

\section{Survey of SSM--Attention Hybrids}
\label{sec:hybrid_survey}

To make explicit how SISA's score-level fusion differs from prior hybrid paradigms, we summarize the major SSM-Transformer hybrids surveyed in this work, classified by fusion level:

\begin{itemize}[leftmargin=*]
\item \textbf{Block level:} alternating SSM and attention layers (e.g., 1{:}7 ratio or sandwich structures). The two mechanisms are split \emph{at the layer level}, and only one type runs per layer.
\item \textbf{Head level:} attention heads and SSM heads in parallel within a layer; their outputs are concatenated or weighted-summed, but the two \emph{do not cross-reference inside the head computation}.
\item \textbf{Score level (this work):} the two mechanisms are fused additively \emph{inside} the attention score computation, via a single SDPA call.
\item \textbf{Other:} gated, cross-attention, and other variants not captured by the three above.
\end{itemize}

\begin{table}[ht]
\centering\small
\caption{Classification of major SSM--attention hybrids by fusion level (representative examples). The table shows the major cases from our survey; we found no prior use of a \textit{vector-valued data-dependent} bias at the score level.}
\label{tab:hybrid_survey}
\begin{tabular}{llll}
\toprule
\textbf{Model} & \textbf{Fusion level} & \textbf{SSM type} & \textbf{Attention type} \\
\midrule
\multicolumn{4}{l}{\textit{Block level (alternating layers)}} \\
\midrule
Jamba~\citep{jamba} & Block & Mamba-1 & Full attention (1{:}7) \\
Samba~\citep{samba} & Block & Mamba-1 & SWA \\
Griffin~\citep{griffin} & Block & RG-LRU & Local attention \\
Zamba~\citep{zamba} & Block & Mamba-1 & Shared global attention \\
Zamba-2~\citep{zamba2} & Block & Mamba-2 & Shared global attention \\
Nemotron-H~\citep{nemotronh} & Block & Mamba-2 & Full attention \\
RecurrentGemma~\citep{recurrentgemma} & Block & RG-LRU & Local attention \\
Mamba-2-Hybrid~\citep{mamba2hybrid} & Block & Mamba-2 & Full attention \\
Bamba~\citep{bamba} & Block & Mamba-2 & Full attention \\
\midrule
\multicolumn{4}{l}{\textit{Head level (parallel heads)}} \\
\midrule
Hymba~\citep{hymba} & Head & Mamba-1 & Full attention (parallel) \\
Falcon-H1~\citep{falconh1} & Head & Mamba-2 & Full attention (parallel) \\
\midrule
\multicolumn{4}{l}{\textit{Score bias (scalar)}} \\
\midrule
ALiBi~\citep{alibi} & Score (scalar) & --- & Full + scalar bias \\
T5 relative~\citep{raffel2020t5} & Score (scalar) & --- & Full + scalar bias \\
DAPE~\citep{dape} & Score (scalar) & --- & Full + scalar bias \\
FoX~\citep{fox} & Score (scalar) & forget gate & Full + scalar bias \\
\midrule
\multicolumn{4}{l}{\textit{Score bias (vector, data-dependent)}} \\
\midrule
\textbf{SISA (ours)} & \textbf{Score (vector)} & \textbf{Mamba-3 channels} & \textbf{Full + vector bias} \\
\bottomrule
\end{tabular}
\end{table}

\textbf{Observation.} Existing hybrids either (a) split the mechanisms across layers/heads, or (b) add only a \emph{scalar} bias to the score. SISA adds a $d_s$-dimensional inner product to the score, which encodes both decay ($e^{g_i - g_j}$) and data-dependent rotation ($R(\Phi_j - \Phi_i)$). We did not find this form of fusion in our survey.

\textit{Survey scope and limits.} The table contains representative published cases and does not enumerate every sub-version of each family; private or small-scale work is not included. ``Vector bias at score level'' is judged by published papers and code.

\section{v1 vs.\ v2 Full Benchmark Comparison}

\begin{table}[ht]
\centering\small
\caption{SISA v1 (fixed $\lambda$) vs.\ v2 (learned $\lambda$) at 152M. \textit{Earlier mb=2 measurements; the effect of $\lambda$ in fp32 is analyzed under the mb=2 setting.}}
\label{tab:v1v2_appendix}
\begin{tabular}{lccccc}
\toprule
& \textbf{LAMBADA} & \textbf{NIAH} & \textbf{HellaSwag} & \textbf{ARC-E} & \textbf{WinoG} \\
\midrule
v1 (fixed $\lambda$) & 16.1 & 100.0 & 26.7 & 35.8 & 51.8 \\
\textbf{v2 (learned $\lambda$)} & \textbf{17.7} & 100.0 & 26.6 & 35.4 & \textbf{52.2} \\
\midrule
$\Delta$ (\,pp) & \textbf{+1.6} & 0.0 & $-$0.1 & $-$0.4 & \textbf{+0.4} \\
\bottomrule
\end{tabular}
\end{table}

The benefit of learning $\lambda$ is largest on LAMBADA ($+1.6$\,pp). NIAH is already at 100\% in v1, so it does not change. The fact that v1 ($\lambda$ frozen) already beats Transformer (13.9\%) by reaching 16.1\% indicates that the \emph{shape} of the SSM bias is itself useful, with adaptive $\lambda$ adding $+1.6$\,pp on top.

\section{Throughput--Performance Efficiency}

\begin{table}[ht]
\centering\small
\caption{Performance per throughput at 152M; mb=4 retrain on identical hardware. $^*$ Step-7K value of SISA $d_s{=}32$~v2 (mb=4).}
\begin{tabular}{lrrrr}
\toprule
& \textbf{tok/s} & \textbf{LAMBADA} & \textbf{LAMBADA / 1K tok/s} & \textbf{NIAH} \\
\midrule
Transformer & 27{,}714 & 13.9 & 0.50 & 100.0 \\
\textbf{SISA}$^*$ & 16{,}783 & \textbf{17.4} & \textbf{1.04} & \textbf{100.0} \\
Mamba-2 & 10{,}719 & 12.7 & 1.18 & 82.5 \\
Mamba-3 & 13{,}460 & 15.5 & 1.15 & 99.0 \\
\bottomrule
\end{tabular}
\end{table}

On LAMBADA per 1K tok/s, SISA (1.04) is \textbf{$2.1\times$} as efficient as Transformer (0.50). Mamba-2 (1.18) and Mamba-3 (1.15) score slightly higher in this ratio, but Mamba-2 lags badly on NIAH (82.5\%) and even Mamba-3 misses 100\% (99.0\%). SISA gives the best efficiency \emph{achievable while fully preserving retrieval}.

\section{Training Efficiency}

\begin{table}[ht]
\centering
\caption{LAMBADA (\%) trajectory at 152M; mb=4 retrain. SISA uses the $d_s{=}32$ v2 (fp32 $\lambda$) trajectory. $^*$ Step-7K value, since the mb=4 v2 retrain ran only to 7K. Mb=2 v2 step 9{,}536 = 17.7 is in Table~\ref{tab:v1v2_appendix}. SISA exceeds Transformer's final (13.9\%) at step 3K (31\% of training).}
\label{tab:efficiency}
\begin{tabular}{lrrrrrr}
\toprule
& \textbf{1K} & \textbf{2K} & \textbf{3K} & \textbf{5K} & \textbf{7K} & \textbf{Final} \\
\midrule
Transformer & 3.5 & 5.9 & 6.2 & 7.2 & 13.9 & 13.9 \\
\textbf{SISA} & \textbf{8.4} & \textbf{13.9} & \textbf{14.4} & \textbf{15.8} & \textbf{17.4} & \textbf{17.4}$^*$ \\
Mamba-2 & 1.0 & 6.9 & 10.2 & 12.1 & 12.4 & 12.7 \\
Mamba-3 & 1.8 & 10.0 & 12.0 & 13.9 & 15.3 & 15.5 \\
\bottomrule
\end{tabular}
\end{table}

SISA matches Transformer's final 13.9\% at step 2K (21\% of training)---reaching the same accuracy with about $5\times$ fewer tokens---and exceeds it (14.4\%) at step 3K.

\begin{figure}[t]
\centering
\includegraphics[width=\textwidth]{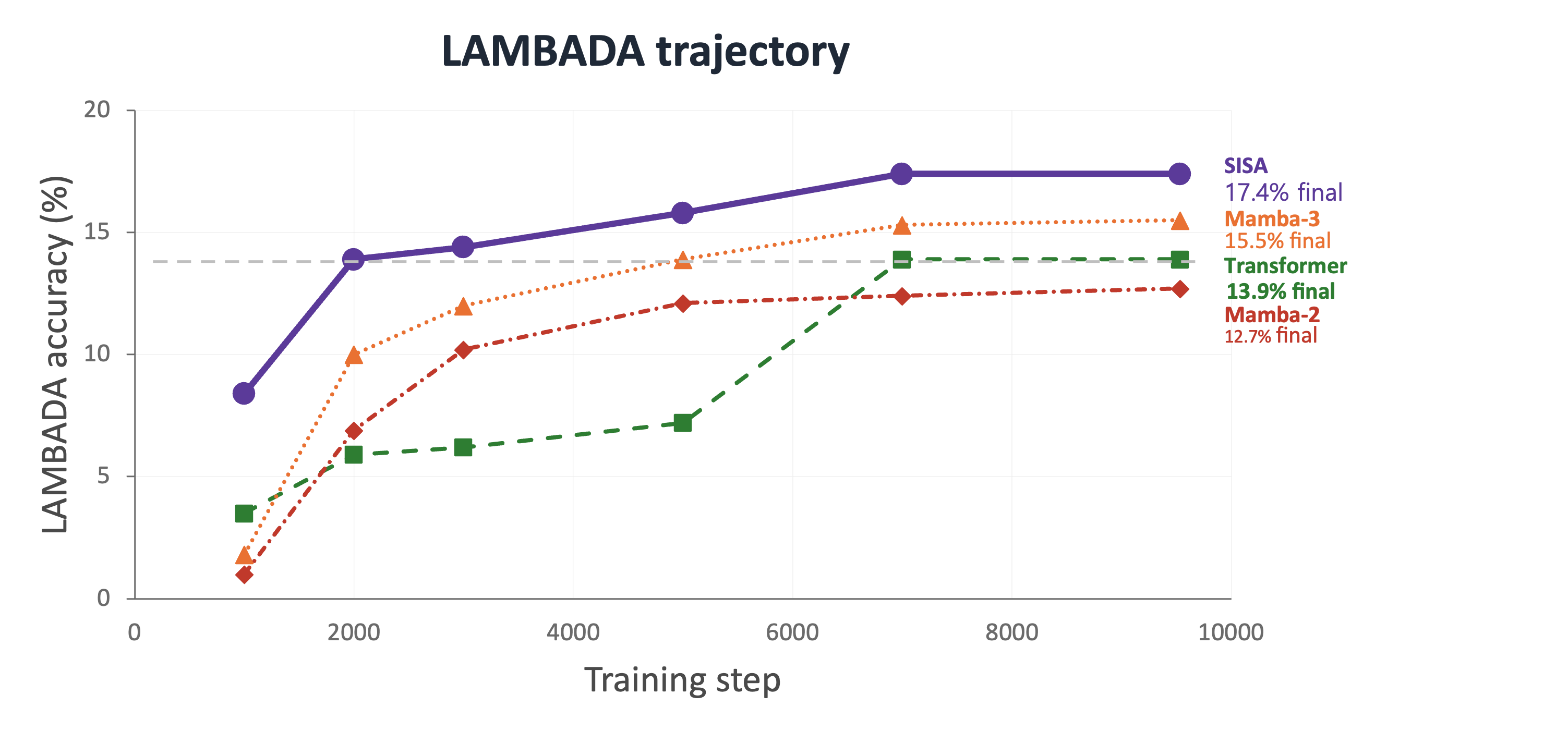}
\caption{LAMBADA trajectory at 152M. SISA reaches Transformer's final accuracy by step 2K and converges roughly 24\% higher.}
\label{fig:lambada_trajectory}
\end{figure}

\section{Why $\lambda$ Must Be Trainable}
\label{sec:lambda}

In our early experiments, $\lambda$ failed to move from its initialization due to bf16 precision. The smallest representable bf16 change near $-1.0$ is $0.0078$, while a single AdamW update is around $10^{-5}$---far below the bf16 step. We solved this by keeping $\lambda_{\text{raw}}$ in fp32.

\begin{table}[ht]
\centering\small
\caption{Effect of $\lambda$ in fp32 on LAMBADA (\%) at 152M. The full v1 trajectory and v2 mid-step (1K--5K) are mb=4 retrains; v2 final 17.7 is the mb=2 reference (the mb=4 v2 retrain reached only step 7K = 17.4).}
\begin{tabular}{lrrrrr}
\toprule
& \textbf{Step 1K} & \textbf{2K} & \textbf{3K} & \textbf{5K} & \textbf{Final} \\
\midrule
v1 ($\lambda$ bf16, fixed at 0.31) & 4.8 & 11.1 & 13.1 & 15.4 & 16.1 \\
\textbf{v2 ($\lambda$ fp32, learned)} & \textbf{8.4} & \textbf{13.9} & \textbf{14.4} & \textbf{15.8} & \textbf{17.7} \\
\bottomrule
\end{tabular}
\end{table}

The gap at step 1K is $+74\%$. Notably, v1 (fixed $\lambda$) still improves over Transformer (13.9\%) to reach 16.1\%, indicating that the \emph{structure} of the SSM bias is itself useful; learning $\lambda$ adaptively adds another $+1.6$\,pp.

\section{Per-Benchmark Architectural Strengths}
\label{sec:bench_analysis}

SISA does not top every benchmark, and the exceptions are informative.

\paragraph{ARC-Easy: Mamba-2 wins (36.4\% vs.\ SISA $d_s{=}16$ 34.7\%).} ARC-Easy asks about factual relations (``what does a plant need to grow?''). The linear recurrence $\mathbf{h}_t = \alpha_t \mathbf{h}_{t-1} + \mathbf{B}_t \mathbf{x}_t$ accumulates distributional patterns from training data efficiently into the state. Mamba's full recurrence is the most effective at this accumulation; SISA picks up only a portion of the effect via its additive bias inside attention. Even so, SISA (34.7\%) beats Transformer (33.3\%) by $+1.4$\,pp, and the mb=2 $d_s = 32$ variant (35.8\%) also beats Mamba-3 (34.9\%), so the SSM bias contributes to factual tasks too.

\paragraph{HellaSwag: differences are small.} All models cluster in 25--27\%; at 152M, commonsense reasoning is barely differentiated. With a 25\% random baseline, the spread is only 1--2\,pp---we expect cleaner separation at larger scale.

\section{Statistical Significance at 369M}
\label{sec:bootstrap}

To check whether the SISA vs.\ Transformer differences at 369M are statistically meaningful, we ran a bootstrap analysis with 10{,}000 iterations.

\begin{table}[ht]
\centering
\caption{369M SISA vs.\ Transformer: bootstrap 95\% confidence intervals over benchmark items (single training seed each). \textbf{LAMBADA-rank} is the perplexity-rank accuracy (whether the target token has the lowest negative log-likelihood among the candidates) and is a different metric from \textbf{LAMBADA-greedy} in Table~\ref{tab:scaling} (15.4\%/14.8\%). Bootstrap captures item-level variance only, not training-seed variance.}
\label{tab:bootstrap}
\begin{tabular}{lccl}
\toprule
\textbf{Benchmark} & \textbf{Transformer [95\% CI]} & \textbf{SISA [95\% CI]} & \textbf{Item-level CI overlap} \\
\midrule
LAMBADA-rank & 56.4\% [55.1, 57.8] & 56.3\% [55.0, 57.6] & yes \\
HellaSwag & 26.5\% [25.6, 27.3] & 26.4\% [25.5, 27.3] & yes \\
ARC-Easy & 36.0\% [34.1, 38.0] & 34.3\% [32.4, 36.3] & yes \\
WinoGrande & 53.1\% [50.4, 55.9] & 50.9\% [48.1, 53.7] & yes \\
\bottomrule
\end{tabular}
\end{table}

\textbf{All four item-level CIs overlap.} Under the available runs, SISA and Transformer are within bootstrap noise on these four benchmarks at 369M; multiple training seeds would be needed to resolve small mean differences. We analyze the source of this closeness in the next section.

\section{Parameter Budget Ratios}
\label{sec:ffn_ratio}

To explain why SISA's relative performance shifts with scale, we decompose the parameter budget by $d_s$. In each configuration we adjust $d_{\text{ff}}^{\text{red}}$ to keep total parameters equal to the Transformer baseline.

\begin{table}[ht]
\centering\small
\caption{Parameter ratios for three scales $\times$ three $d_s$ values. ``FFN reduction'' is the $d_{\text{ff}}$ shrinkage relative to Transformer.}
\label{tab:ffn_ratio}
\begin{tabular}{l|c|rrr|rr|r}
\toprule
\textbf{Scale} & \textbf{$d_s$} & \textbf{SSM} & \textbf{FFN} & \textbf{Attn} & \textbf{SSM\%} & \textbf{FFN\%} & \textbf{FFN red.} \\
\midrule
\multirow{3}{*}{50M}  & 16 & 1.0M & 17.9M & 6.3M & 1.98\% & 35.1\% & \textbf{5.3\%} \\
                      & 32 & 2.0M & 16.9M & 6.3M & 3.91\% & 33.2\% & 10.5\% \\
                      & 64 & 4.0M & 14.9M & 6.3M & 7.77\% & 29.3\% & 20.9\% \\
\midrule
\multirow{3}{*}{152M} & 16 & 4.5M & 80.4M & 28.3M & 2.99\% & 52.9\% & \textbf{5.3\%} \\
                      & 32 & 9.0M & 76.0M & 28.3M & 5.90\% & 50.0\% & 10.5\% \\
                      & 64 & 17.8M & 67.1M & 28.3M & 11.72\% & 44.2\% & 21.0\% \\
\midrule
\multirow{3}{*}{369M} & 16 & 16.1M & 200.9M & 100.7M & 4.37\% & 54.4\% & \textbf{7.4\%} \\
                      & 32 & 31.9M & 185.2M & 100.7M & 8.63\% & 50.2\% & 14.7\% \\
                      & 64 & 63.3M & 153.7M & 100.7M & 17.15\% & 41.6\% & 29.2\% \\
\bottomrule
\end{tabular}
\end{table}

For Transformer, the FFN share grows steeply with scale: $37.1\%$ (50M) $\to 55.9\%$ (152M) $\to 58.8\%$ (369M). At larger scale, \textbf{shrinking the FFN hurts more}, which explains why a fixed $d_s$ becomes automatically suboptimal.

Three observations:

\textbf{(1) $d_s = 16$ is a scale-invariant cost.} FFN reduction is exactly 5.3\% at 50M/152M and 7.4\% at 369M---almost flat. $d_s = 16$ is the natural ``constant cost'' choice, with bounded FFN loss at any scale.

\textbf{(2) $d_s = 32$ degrades with scale.} It is 10.5\% at 50M/152M but rises to 14.7\% at 369M, suggesting that a fixed $d_s$ behaves differently across regimes. The reason is that SSM parameters ($\propto L \cdot d \cdot h \cdot d_s$) grow faster than FFN ($\propto L \cdot d \cdot d_{\text{ff}}$): from 50M to 369M, $h$ doubles, $L$ quadruples, $d$ doubles, so SSM grows $16\times$ while FFN grows only $11.5\times$.

\textbf{(3) $d_s = 64$ is extreme at 369M.} It cuts FFN by 29.2\%, making the FFN share (41.6\%) only $2.4\times$ the SSM share (17.2\%). At 50M (FFN cut 20.9\%), $d_s = 64$ is best on LAMBADA (14.40\%) and WinoGrande (52.4\%)---a regime where SSM gains exceed FFN losses. At 369M, FFN reduction matters more, so $d_s = 64$ records the worst LAMBADA (13.10\%, Section~\ref{sec:ds_ablation}).

These three ratio regimes manifest as distinct ablation behaviors in Section~\ref{sec:ds_ablation}.

\paragraph{Why is $d_s = 16$ optimal at 152M but $d_s = 128$ at 369M?} FFN share alone does not explain it: 152M (55.9\%) and 369M (58.8\%) are similar yet have opposite optima. Two additional factors matter.

\textit{(i) Data-to-parameter regime.} 152M ($33\times$ tokens/param) is near Chinchilla-optimal ($\sim 20\times$): the model uses its FFN capacity fully, so any FFN loss is paid in full and small $d_s$ wins on the cost--benefit margin. 369M ($13.5\times$) is data-constrained: part of the FFN capacity is unused, the marginal cost of FFN reduction is smaller, and richer SSM bias from larger $d_s$ provides more useful signal.

\textit{(ii) Per-head SSM expressivity.} The bias capacity is set by the rank of the per-head $d_s$-dimensional inner product. At 152M (12 heads), a small $d_s = 16$ already provides enough ``importance signal,'' and FFN preservation should dominate. At 369M (16 heads, 24 layers), the bias signal accumulates over a deeper stack and a richer per-head channel space helps; $d_s = 128$ supplies it. In short, \textbf{the optimal $d_s$ tracks ``data/parameter ratio $\times$ per-head expressivity demand''}, and these two factors create the observed U-shape: small $d_s$ at 152M, larger $d_s$ at the extremes (50M and 369M; Section~\ref{sec:ds_ablation}).

\section{$d_s$ Ablation Study}
\label{sec:ds_ablation}

For $d_s \in \{16, 32, 64, 128\}$ at three scales (50M, 152M, 369M), we adjust the FFN as in Table~\ref{tab:ffn_ratio} so that total parameters match the Transformer baseline, then train on the same data.

\begin{figure}[t]
\centering
\includegraphics[width=\textwidth]{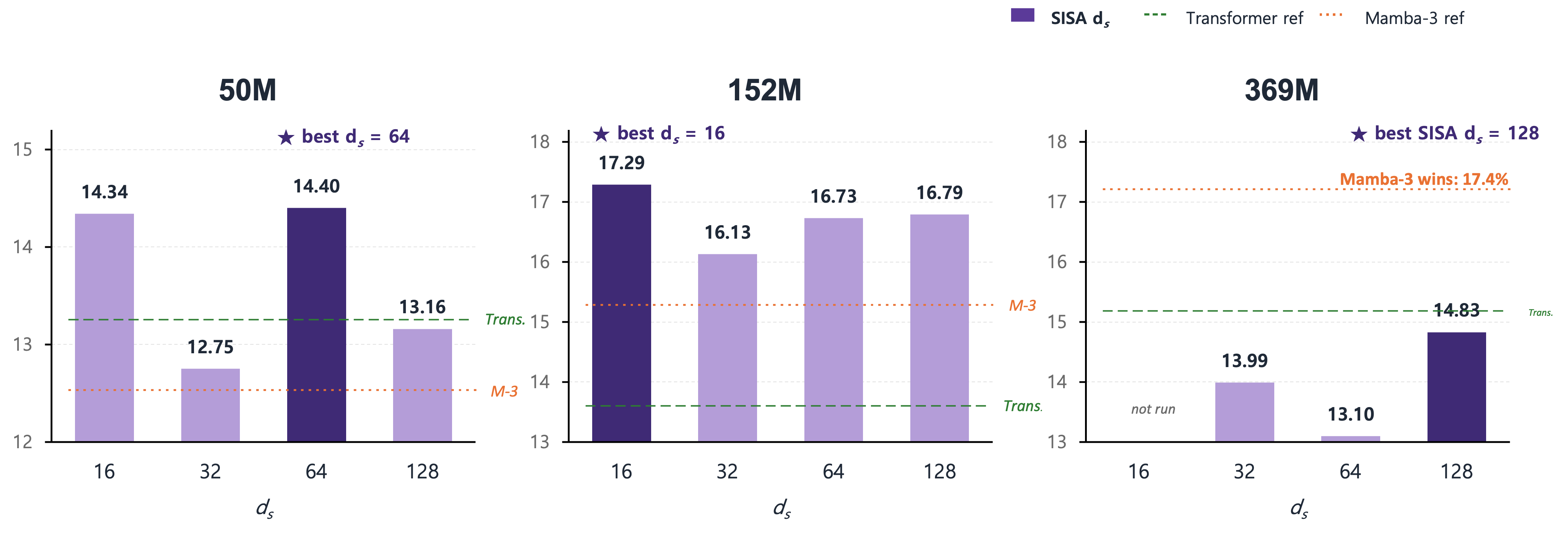}
\caption{$d_s$ ablation. Best is $d_s=64$ at 50M, $d_s=16$ at 152M, and (among the retrain candidates) $d_s=128$ at 369M---the optimum is non-monotonic in scale.}
\label{fig:ds_ablation}
\end{figure}

\subsection{50M (5B tokens, full training)}

\begin{table}[ht]
\centering\small
\caption{50M $d_s$ ablation. Transformer and our retrains ($d_s \in \{16, 64, 128\}$) at step 9{,}000; mb=2 $d_s = 32$ at the final 5B-token step.}
\label{tab:ds_50m}
\begin{tabular}{lrcccccc}
\toprule
Model & $d_s$ & $d_{\text{ff}}^{\text{red}}$ & LAMBADA & NIAH & HSwag & ARC-E & WinoG \\
\midrule
Transformer & --- & 2048 & 13.39 & 100.0 & 25.2 & 33.4 & 51.1 \\
\midrule
\multirow{4}{*}{SISA} & 16 & 1939 & 14.34 & 100.0 & 24.9 & 33.3 & 51.5 \\
 & 32 (mb=2) & 1832 & 12.75 & 100.0 & 25.1 & 33.5 & 50.7 \\
 & 64 & 1619 & \textbf{14.40} & 100.0 & 25.2 & 32.8 & \textbf{52.4} \\
 & 128 & 1192 & 13.16 & 100.0 & \textbf{25.4} & 33.2 & 51.6 \\
\bottomrule
\end{tabular}
\end{table}

\textbf{Key finding.} The earlier observation ``SISA $\approx$ Transformer at 50M'' was an artifact of fixing $d_s = 32$. With $d_s$ at 16 or 64, SISA beats Transformer on LAMBADA by $+0.95$--$1.01$\,pp. $d_s = 32$ actually trails Transformer by 0.64\,pp---it sits in the valley of a U-shaped curve. $d_s = 128$ (FFN cut 41.8\%) drops to LAMBADA 13.16\%, slightly below $d_s = 64$, confirming \textbf{$d_s = 64$ as the 50M optimum} and showing a regime where, in small models, FFN loss eventually exceeds SSM gain.

\subsection{152M (5B tokens, full training)}

\begin{table}[ht]
\centering\small
\caption{152M $d_s$ ablation. Our retrains ($d_s \in \{16, 64, 128\}$) at step 9{,}000; Transformer and mb=2 $d_s = 32$ at the final 5B-token step.}
\label{tab:ds_152m}
\begin{tabular}{lrcccccc}
\toprule
Model & $d_s$ & $d_{\text{ff}}^{\text{red}}$ & LAMBADA & NIAH & HSwag & ARC-E & WinoG \\
\midrule
Transformer & --- & 3072 & 13.88 & 100.0 & 25.4 & 33.3 & 51.2 \\
\midrule
\multirow{4}{*}{SISA} & 16 & 2908 & \textbf{17.29} & 100.0 & \textbf{26.8} & 34.9 & \textbf{52.6} \\
 & 32 (mb=2) & 2748 & 16.13 & 100.0 & 26.7 & \textbf{35.8} & 51.8 \\
 & 64 & 2428 & 16.73 & 100.0 & 26.6 & 35.2 & 51.2 \\
 & 128 & 1788 & 16.79 & 100.0 & 25.95 & 34.95 & 51.93 \\
\bottomrule
\end{tabular}
\end{table}

\textbf{Key finding.} At 152M, \textbf{$d_s = 16$ is best on LAMBADA (17.29\%)}, beating $d_s = 32$ (16.13) by $+1.16$\,pp and Transformer (13.88) by $+3.41$\,pp ($+24.6\%$ relative). $d_s = 16$ also tops WinoGrande and HellaSwag. $d_s \in \{64, 128\}$ also reach LAMBADA 16.7--16.8, slightly above $d_s = 32$. On ARC-E (knowledge), $d_s = 32$ leads at 35.79\%---a task-dependent optimum. Small $d_s$ ($d_s = 16$) at 152M is consistently strong on retrieval/context tasks (LAMBADA, NIAH, HellaSwag).

\subsection{369M}

\begin{table}[ht]
\centering\small
\caption{369M $d_s$ ablation. mb=2 variants at step 9{,}536 (5B tokens), mb=4 retrains at step 9{,}000.}
\label{tab:ds_369m}
\begin{tabular}{lrccccccc}
\toprule
Model & $d_s$ & $d_{\text{ff}}^{\text{red}}$ & step & LAMBADA & NIAH & HSwag & ARC-E & WinoG \\
\midrule
Transformer & --- & 2944 & 9536 & 15.40 & \textbf{100.0} & 26.5 & 35.1 & \textbf{53.3} \\
Mamba-3 (mb=2) & --- & --- & 9536 & 17.10 & \textbf{100.0} & \textbf{27.9} & \textbf{35.4} & 51.5 \\
Mamba-3 (mb=4) & --- & --- & 9000 & \textbf{17.35} & 86.5 & 26.81 & 35.04 & 50.67 \\
\midrule
\multirow{4}{*}{SISA} & 32 (mb=2) & 2512 & 9536 & 15.40 & 99.5 & 26.4 & 34.6 & 51.9 \\
 & 32 (mb=4) & 2512 & 9000 & 13.99 & \textbf{100.0} & 26.0 & 33.9 & 51.4 \\
 & 64 (mb=4/8) & 2085 & 9000 & 13.10 & \textbf{100.0} & 25.7 & 34.4 & 51.8 \\
 & 128 (mb=4) & 1232 & 9000 & 14.83 & \textbf{100.0} & 25.89 & 34.69 & 51.30 \\
\bottomrule
\end{tabular}
\end{table}

\textbf{Finding 1: Mamba-3 wins LAMBADA at 369M.} Under the fair mb=4 retrain, Mamba-3 (LAMBADA 17.4) beats SISA $d_s = 128$ (14.8) by $+2.6$\,pp---\emph{full SSM recurrence is more sample-efficient than the attention + SSM-bias hybrid for long-tail context tracking}.

\textbf{Finding 2: SISA dominates NIAH.} All SISA variants reach 100\%; Mamba-3 (mb=4 retrain) plateaus at 86.5\%---preserving attention-based retrieval is SISA's core strength.

\textbf{Finding 3: Within our mb=4 retrains, $d_s = 128$ is best.} Same-protocol comparison: $d_s = 128$ (14.83) $>$ $d_s = 32$ (13.99) $>$ $d_s = 64$ (13.10$^*$). The pattern is non-monotonic; $^*d_s = 64$ used a mixed mb=4/8 schedule and contains some noise.

\textbf{Trade-off summary.} At 369M, SISA wins on retrieval (NIAH 100\%) and infrastructure compatibility, while Mamba-3 wins on long-tail context (LAMBADA). They are architectural choices with complementary strengths.

\subsection{Convergence Speed}
\label{sec:convergence}

$d_s$ affects not only final accuracy but also \textbf{convergence speed}.

\begin{table}[ht]
\centering\small
\caption{NIAH accuracy by training step (\%). Smaller $d_s$ converges earlier; larger $d_s$ may dominate at the end.}
\label{tab:niah_convergence}
\begin{tabular}{l|rrrr|rrr}
\toprule
 & \multicolumn{4}{c|}{\textbf{50M}} & \multicolumn{3}{c}{\textbf{369M}} \\
$d_s$ & step 1k & step 2k & step 3k & step 5k & step 1k & step 2k & step 3k \\
\midrule
16 & \textbf{100.0} & 100.0 & 100.0 & 100.0 & 0.0 & 75.5 & 99.5 \\
32 (mb=2) & 73.5 & 100.0 & 100.0 & 100.0 & 0.0 & 56.5 & 84.0 \\
64 & 9.0 & 100.0 & 100.0 & 100.0 & 0.0 & --- & 100.0 \\
128 & 3.0 & 100.0 & 100.0 & 100.0 & 0.0 & 46.5 & \textbf{100.0} \\
\bottomrule
\end{tabular}
\end{table}

\textbf{$d_s = 16$ converges fastest early.} At 50M, NIAH 100\% by step 1{,}000. Larger $d_s$ values learn more slowly at first.

\textbf{Larger $d_s$ wins late.} At 50M, $d_s = 64$'s LAMBADA peaks at 14.63\% at step 7K (the overall best, ahead of $d_s = 16$ at 14.30\%). At 369M, $d_s = 128$ reaches LAMBADA 8.83\% at step 3K---a clear lead. More SSM capacity is preferred for \emph{long-tail} training.

\subsection{Choosing $d_s$: Summary}

Across three scales and four $d_s$ values, the per-scale optima are:

\begin{itemize}[leftmargin=*]
\item \textbf{50M ($d = 512$):} $d_s = 64$ is best (LAMBADA 14.40). $d_s = 128$ drops to 13.16 due to a 41.8\% FFN cut.
\item \textbf{152M ($d = 768$):} $d_s = 16$ is best (LAMBADA 17.29); small $d_s$ favors retrieval/context tasks. $d_s \in \{32, 64, 128\}$ all reach 16.1--16.8, beating the mb=2 $d_s = 32$ baseline.
\item \textbf{369M ($d = 1024$):} among our retrains, $d_s = 128$ is best (LAMBADA 14.83), close to the original mb=2 $d_s = 32$ (15.40).
\end{itemize}

\textbf{Take-away.} There is no single ``optimal $d_s$''---it varies non-monotonically with scale and task. The $d_s/d$ ratios are roughly $12.5\%$ at 50M ($d_s=64$), $2\%$ at 152M ($d_s=16$), and $12.5\%$ at 369M ($d_s=128$): the extremes prefer larger $d_s$, the middle prefers smaller. A simple $d_s = c \cdot d_{\text{model}}$ rule does not fit; the trade-off between FFN reduction and SSM gain operates differently across regimes, and ARC-E (knowledge) and LAMBADA (long-range context) prefer different $d_s$.

\section{Broader Impact}
\label{sec:broader_impact}

SISA is a foundational study proposing an architectural improvement to language models. The general risks of language models (biased generation, misinformation, etc.) apply equally here; this work neither aggravates nor mitigates them. SISA's training-efficiency improvement (matching final accuracy with about $5\times$ fewer tokens) may have a small positive effect by reducing training cost and energy consumption.

\section{Reproducibility}
\label{sec:reproducibility}

\begin{itemize}[leftmargin=*]
\item \textbf{Code (provided with this submission):} The full anonymized implementation---model code, training scripts, evaluation harness for all five benchmarks, and configuration files for the four architectures across three scales---is included as supplementary material, with a \texttt{README} specifying environment setup and exact run commands. Public release on GitHub will follow upon acceptance.
\item \textbf{Per-step benchmarks (provided with this submission):} JSON files containing per-step results for all benchmarks.
\item \textbf{Checkpoints:} Model checkpoints will be released after acceptance; the final hosting format (HuggingFace, etc.) is under consideration.
\item \textbf{Hyperparameters:} Fully specified in Table~\ref{tab:configs} and Appendices~A--B.
\item \textbf{Hardware:} NVIDIA H100 80\,GB PCIe, mixed-precision bf16.
\end{itemize}

\section{Future Work}
\label{sec:future_work}

\begin{enumerate}[leftmargin=*]
\item \textbf{Intermediate $d_s$ values.} The U-shape was observed for $d_s \in \{16, 32, 64, 128\}$. Intermediate values such as $d_s = 96$ would refine the location of the 152M valley (currently $d_s = 16$) and the 369M upward edge (currently $d_s = 128$).
\item \textbf{Scale-adaptive $d_s$.} Replacing the manual choice of $d_s$ with a learned mechanism (per-layer $d_s$ or sparse SSM channels).
\item Validation at $1B+$ scale and a $d_s$ scaling law.
\item Long-context training (with RoPE scaling).
\item Per-layer / per-head distribution of $\lambda$ and task-specific patterns.
\item \textbf{SISA-2 (follow-up).} Extending score-level fusion in two directions: (a) sigmoid-based SISA to address the softmax dilution observed in Section~\ref{sec:bench_analysis_main}, and (b) deeper FFN--SSM integration toward a fully fused architecture without a separate block.
\end{enumerate}

\end{document}